\documentclass{article}
\usepackage{graphicx} % Required for inserting images
\usepackage{amsthm}
\usepackage[american]{babel}
\usepackage{natbib} % has a nice set of citation styles and commands
\usepackage{mathtools} % amsmath with fixes and additions
\usepackage{booktabs} % commands to create good-looking tables
\usepackage{tikz} % nice language for creating drawings and diagrams
\usepackage{makecell}
\usepackage{authblk}
\usepackage[top=2cm, bottom=2cm, left=2cm, right=2cm]{geometry} % Define as margens do documento
\usepackage{enumitem}
\usepackage{url}
\usepackage{hyperref}
\usepackage[toc,page]{appendix}

\title{Locus - Arxiv}

\author[1, 2]{Matheus Barreto}
\author[2]{Mário de Castro}
\author[1]{Thiago R. Ramos}
\author[3]{Denis Valle}
\author[1]{Rafael Izbicki}

\affil[1]{\small Department of Statistics, Federal University of São Carlos, São Carlos, São Paulo, Brazil}
\affil[2]{\small Institute of Mathematics and Computer Science, University of São Paulo, São Carlos, São Paulo, Brazil}
\affil[3]{\small School of Forest, Fisheries, and Geomatics Sciences, University of Florida, Gainesville, Florida, United States of America}

\usepackage[most]{tcolorbox}
\tcbset{
  colback=gray!2,
  colframe=gray!35,
  arc=2mm,
  boxrule=0.4pt,
  left=6pt,right=6pt,top=6pt,bottom=6pt
}

\newtheorem{theorem}{Theorem}
\newtheorem{remark}{Remark}

\newtheorem{assumption}{Assumption}
\usepackage{amssymb}

\usepackage{amsfonts} % provides \mathbb
\DeclareMathOperator{\Prbb}{\mathbb{P}}
\DeclareMathOperator{\E}{\mathbb{E}}
\newcommand{\1}{\mathbf{1}}

\newcommand{\methodstyle}[1]{{\textsc{#1}}}

\newcommand{\locus}{\methodstyle{Locus}}

\newcommand{\locusT}{\methodstyle{Locus-$\tau$}}

\newcommand{\locusAlpha}{\methodstyle{Locus-$\alpha$}}

\newcommand{\locusLambda}{\methodstyle{Locus-Tuned}}

\title{\locus{}: A Distribution-Free Loss-Quantile Score for  Risk-Aware Predictions}

\begin{document}
\maketitle

\begin{abstract}
Modern machine learning models can be accurate on average yet still make
 mistakes that dominate deployment cost. We introduce \locus{}, a
distribution-free wrapper that produces a per-input loss-scale reliability
score for a fixed prediction function. Rather than quantifying uncertainty about
the label, \locus{} models the realized loss of the prediction function  using any  engine that outputs a
predictive distribution for the loss given an input. A simple split-calibration step
turns this function into a distribution-free interpretable score that is comparable across
inputs and can be read as an upper loss level. The score is useful
on its own for ranking, and it can optionally be
thresholded to obtain a transparent flagging rule with distribution-free control of large-loss events.
Experiments across 13 regression benchmarks show that \locus{} yields effective
risk ranking and reduces large-loss frequency compared to standard
heuristics.
\end{abstract}

\section{Introduction}\label{sec:intro}

Modern machine learning models can be highly accurate on average \citep{lecun2015deep,lakshminarayanan2017simple}, yet a deployed
system is often judged one decision at a time. For a given input $x$, the operational
question is not only what the model predicts, but whether the prediction is
safe to act on or should be flagged for additional review. This need
arises across domains such as clinical decision support, credit
scoring, and autonomous systems, where rare but large errors dominate the real cost of
automation.

A central difficulty is that prediction errors are inherently heterogeneous.
Even with abundant data, irreducible (aleatoric) randomness implies that
the outcome $Y$ is not a deterministic function of $X$
\citep{kendall2017uncertainties}. With limited data, (epistemic)
uncertainty further amplifies risk in regions where the system has not been
well observed \citep{hullermeier2021aleatoric}. Consequently, any fixed predictor $g(x)$ can incur occasional
catastrophic losses on inputs that look benign under global performance
metrics.

Standard performance summaries such as accuracy, RMSE, ROC curves, and AUC quantify the average performance of a model over the deployment distribution \citep{fawcett2006introduction}. They are useful for model selection and benchmarking, but they are global by construction: they do not tell us how risky a single prediction $g(x)$ is. Calibration tools such as reliability diagrams, expected calibration error, or conformal prediction sets are important improvements by providing metrics that are more local, yet they still aggregate over many inputs and often do not translate into a simple rule of the form “flag this $x$ as predictions for it are unreliable”.

In practice, per-instance reliability is usually approximated by some notion of uncertainty. Practitioners estimate the conditional distribution $Y\mid X=x$ using Bayesian predictive distributions, deep ensembles, Monte Carlo dropout, prediction intervals or sets, and  conformal methods (\citealt{lakshminarayanan2017simple,angelopoulos2023gentle}; see Section~\ref{sec:related}  for related work). These approaches produce quantities such as predictive variances, entropies, or interval widths. A common heuristic is then to flag points with large estimated variability or wide prediction sets, and to trust points with tight predictive distributions.
However, these uncertainty proxies are only loosely linked to the quantity that typically matters in deployment: the loss incurred by the prediction that is actually used.

Formally, we fix a deployed prediction function $g:\mathcal{X}\to\mathcal{Y}$ and a
task loss $L(\cdot,\cdot)$ (squared error, 0--1 loss, or a domain-specific cost).
For a new pair $(X,Y)$, we define the realized loss
\[
Z \;=\; L\bigl(g(X),Y\bigr).
\]
A practitioner typically has a threshold $\tau$ that encodes \emph{unacceptable
error} (e.g., a clinically meaningful deviation). The goal is to design a
flagging rule that keeps the event $\{Z>\tau\}$ rare among predictions we choose
to trust, with guarantees that hold in finite samples and without assuming that probabilistic
models are correctly specified.

\paragraph{Our approach: loss-controlled flagging.}
We propose \locus{} (LOss Control using Uncertainty Scores), a distribution-free
wrapper that turns any estimate of the conditional loss law into a deployable
reliability indicator for a fixed prediction function $g(x)$. Concretely, given an i.i.d.\ calibration
sample and any black-box engine that outputs a predictive CDF for the realized loss, $\widetilde F(\cdot\mid x)\approx \Prbb(Z\le \cdot\mid X=x),$ 
we apply a simple split-calibration step to obtain a per-input easily interpretable score $U_\alpha(x)$.
The resulting $U_\alpha(x)$ is calibrated on the loss scale and is comparable
across inputs. When a loss tolerance $\tau$ is specified, we accept (do not flag) an input
whenever $U_\alpha(x)\le \tau$ and flag otherwise, yielding a transparent rule with
 distribution-free control of large-loss events (Theorem~\ref{thm:loss-control-A}).
 
Figure~\ref{fig:intro-example} shows why predictive variance (panel~b) can be a poor proxy for deployment risk. For two fixed prediction functions $g(x)$ (linear, left; $k$NN, right), panel~(a) plots the data and fitted $g(x)$, panel~(c) shows the realized loss $Z=L(g(X),Y)$ together with our loss-scale calibrated upper bound $U_\alpha(x)$ (an estimated $(1-\alpha)$ loss quantile), and panel~(d) thresholds $U_\alpha$ to accept/flag inputs for this specific $g$. Interestingly, the linear model incurs a large loss in low-variance regions due to misfit, which $\sigma(x)$ does not capture, but $U_\alpha(x)$ does.

\begin{figure*}[h]
  \centering
\includegraphics[scale=0.28]{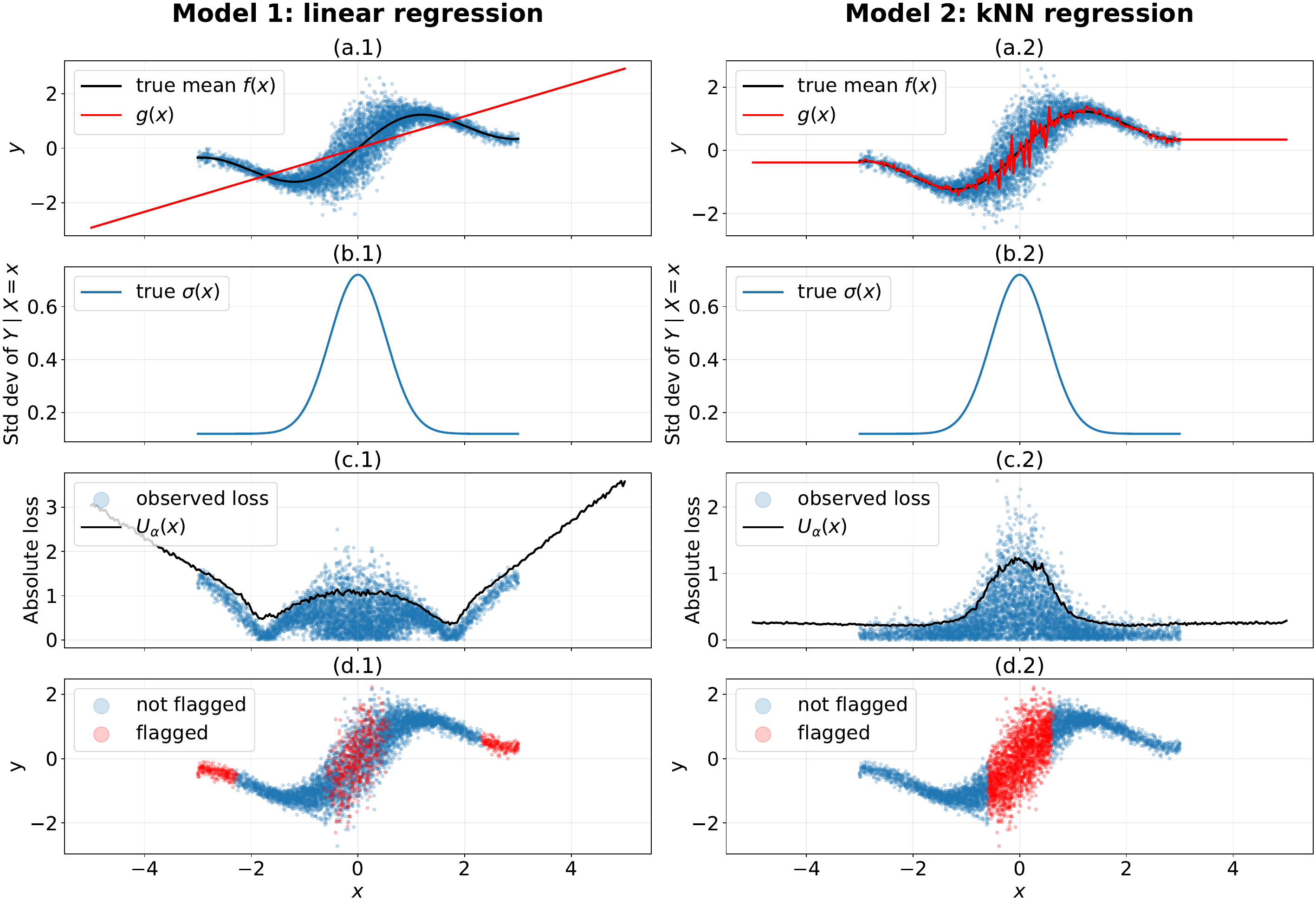}
  \caption{\textbf{Loss-centric reliability versus variance.}
We fit two models to the same data: linear regression (left) and $k$NN regression (right). (a) Data with the true mean $f(x)$ (black) and fitted prediction function $g(x)$ (red). (b) True conditional standard deviation $\sigma(x)$ (aleatoric proxy). (c) Realized absolute loss $Z=|g(X)-Y|$ (points) and our \locus{} score $U_\alpha(x)$ (curve), a calibrated estimate of the $(1-\alpha)$ loss quantile. (d) The induced accept/flag rule for this specific prediction function: accept if $U_\alpha(x)\le\tau$ (blue) and flag otherwise (red). $U_\alpha$ is interpretable in loss units and captures failures in the predicted values (e.g., linear misfit in low-variance regions) that $\sigma(x)$ can miss.}\label{fig:intro-example}
\end{figure*}

\subsection{Novelty and Contributions}\label{subsec:novelty}

\locus{} is a \emph{loss-centric} reliability wrapper for a fixed prediction function $g$,
built around the realized loss $Z=L(g(X),Y)$ rather than uncertainty about $Y\mid X=x$.
 For flagging, it assumes  a user-specified tolerance $\tau$ on the loss scale, which defines what
counts as  unacceptable error.
We make the following contributions:

\begin{itemize}
  \item \textbf{A calibrated loss-quantile score   from any predictive loss CDF.}
  Given any predictive CDF $\widetilde F(\cdot\mid x)$ for $Z\mid X=x$, we build a
  local score $U_\alpha(x)$ with finite-sample, distribution-free marginal validity:
  \[
  \Prbb\bigl(Z\le U_\alpha(X)\bigr)\ge 1-\alpha
  \qquad\text{(Theorem~\ref{thm:coverage}).}
  \]
  Under additional assumptions, $U_\alpha(x)$ also achieves asymptotic conditional
  validity (Theorem~\ref{thm:asymp-coverage}), and can be interpreted as a distribution-free calibrated estimate of the upper 
$1-\alpha$ loss quantile at $x$: if $U_\alpha(x)=K$, then there's approximately $\alpha$ probability that the loss at $x$ will be larger than $K$.

  \item \textbf{A simple, interpretable flagging rule with explicit loss control.}
  Thresholding the bound yields acceptance regions
  \[
    A_{\lambda;\alpha}=\{x:U_\alpha(x)\le \lambda\}.
  \]
  Setting $\lambda=\tau$ guarantees distribution-free control
  of ``trusted-but-bad'' events:
  \[
    \Prbb\bigl(Z>\tau,\;X\in A_{\tau;\alpha}\bigr)\le \alpha
    \qquad\text{(Theorem~\ref{thm:loss-control-A})}
  \]
  This matches the operational objective of bounding the overall frequency of
  unacceptable losses among predictions we choose to trust. Moreover, we also have that $\Prbb\bigl(Z>\tau \mid X\in A_{\tau;\alpha}\bigr)\;\lesssim\;\alpha,$ i.e., the probability of a large loss in the accepted samples is approximately smaller than $\alpha$.

  \item \textbf{Practical tuning for conditional exceedance targets.}
  While our core guarantee controls the joint event above, many deployments aim to
  target a conditional exceedance level $\eta$ among accepted points,
  $\Prbb(Z>\tau\mid X\in A_{\lambda;\alpha})\approx \eta$.
  We provide a simple heuristic validation-based tuning scheme, \locusLambda{}, which tunes $\lambda$ over a grid.
  We also provide a fully distribution-free validation-based selection of $\lambda$, guaranteeing with high probability that
  $\Prbb(Z>\tau \mid X\in A_{\lambda^\star;\alpha})\le \eta$
  (Appendix~\ref{app:distribution-free}, Theorem~\ref{thm:uniform-lambda-explicit-correct}).

  \item \textbf{Epistemic-aware inflation without changing calibration.}
  We introduce default constructions of $\widetilde F$ that become more conservative
  in data-scarce regions, improving robustness in extrapolative regimes while leaving the
  distribution-free calibration step and guarantees intact.
\end{itemize}

\subsection{Relation to Other Work}
\label{sec:related}

Our method interacts with three broad lines of work:  conformal prediction and risk control, local or conditional coverage methods, and the literature on selective prediction and reject options. Here, we highlight how \locus{} differs from these approaches.

\vspace{2mm}
\textbf{Selective prediction, reject options, and flagging.}
Selective prediction (classification with a reject option) studies rules that either
predict or abstain, dating back to Chow's rule \citep{Chow70} and many cost/coverage
variants; see surveys in \citep{Zhang2023,Hendrickx2024}. Modern work often learns a
selection function (sometimes jointly with a deep predictor) to trade off coverage
against performance on the selected subset \citep{Geifman2017,Geifman2019}. We view
\locus{} as a wrapper that induces an acceptance region
$A_{\lambda;\alpha}=\{x:U_\alpha(x)\le \lambda\}$, but we do not learn a selector to
optimize a coverage--accuracy objective. Instead, we calibrate a per-input upper bound
on the realized loss and threshold it to  control of large-loss events among accepted cases.
The score $U_\alpha$ is also useful on its own for risk ranking and triage.

As noted by \citet{Hendrickx2024}, the literature on reject options for regression is more limited than its classification counterpart.  In this context, \citet{zaoui2020regression_reject}
studies budgeted rejection and obtain variance-thresholding rules, whereas \locus{} targets
the realized loss and provides distribution-free loss exceedance control.

\vspace{2mm}
\textbf{Flagging via uncertainty and OOD scores.}
A common way to provide per-instance reliability is to use a scalar
\emph{uncertainty proxy} (e.g., predictive variance/entropy, ensemble
disagreement) as a triage signal, often obtained from approximate Bayesian
methods such as MC dropout or deep ensembles
\citep{gal2016dropout,lakshminarayanan2017simple,zaoui2020regression_reject}. However, such proxies can be
misaligned with deployment risk and may fail to increase in regions with few points \citep{ovadia2019can}. 
Closely related, \emph{out-of-distribution (OOD) detection} and \emph{open-set
recognition} aim to flag atypical inputs using confidence- or distance-based
scores (e.g., maximum softmax probability, ODIN, Mahalanobis feature distances)
\citep{hendrycks2017baseline,liang2018odin,lee2018mahalanobis}, with surveys
connecting these ideas to anomaly detection and open-set recognition
\citep{geng2021openset,yang2024}. While effective at detecting atypical inputs,
these scores are typically not calibrated to directly control the realized
task loss of a fixed deployed predictor on the accepted set. \locus{}
instead models the realized loss $Z=L(g(X),Y)$ and thresholds a calibrated
upper loss bound.

\vspace{2mm}
\textbf{Conformal prediction and risk control.}
Classical conformal prediction constructs set-valued predictions with finite-sample
marginal coverage under exchangeability \citep{Vovk2005,angelopoulos2023gentle,Izbicki2025},
typically guaranteeing $\Prbb\!\bigl(Y\in C(X)\bigr)\ge 1-\alpha$ without assumptions on the
predictive model. Risk-controlling extensions calibrate prediction sets to bound a
user-chosen loss functional aggregated over future points (e.g., expected risk control) \citep{Bates2021,Angelopoulos2024}. 
These approaches remain global in the sense that they bound an average loss over all future points. 
In contrast, \locus{}  thresholds a loss bound to control catastrophic-loss events on the accepted set via
$\Prbb(Z>\tau,\;X\in A_{\tau;\alpha})\le \alpha$.

\vspace{2mm}
\textbf{Local and conditional guarantees.}
A growing literature on conformal methods aims to move beyond purely marginal coverage and to provide guarantees that adapt to the local difficulty of the prediction task. Examples include local conformal schemes based on regression trees and forests  \cite{CabezasSoaresRamosSternIzbicki2024, cabezas2025,cabezas2025cp4sbi} and other  data-driven groups for local conformal prediction \cite{Lei2014,izbicki2020flexible,izbicki2022cd,Gibbs2023}. 
Our method for building $U_\alpha(x)$ is structurally similar in that it seeks to reflect heterogeneity across $x$, but it operates on the distribution of the loss $Z = L(g(X), Y)$ rather than directly on $Y$.

\vspace{2mm}
\textbf{Epistemic-aware conformal methods.}
Recent work has started to explicitly combine aleatoric and epistemic uncertainty within conformal prediction. \texttt{EPICSCORE} introduces a general mechanism for augmenting any conformal score with a Bayesian estimate of epistemic uncertainty, producing prediction regions that automatically widen in data-sparse areas while preserving finite-sample marginal coverage and achieving asymptotic conditional coverage \citep{CruzCabezas2025}. Similarly, \textsc{LUCCa} (Local Uncertainty Conformal Calibration) targets robotics dynamics, calibrating aleatoric uncertainty estimates while accounting for epistemic uncertainty induced by changing environments \cite{Marques2024}. 
\locus{} is complementary: we can use such
epistemic-aware calibration to build a sharper, more conservative predictive CDF
for $Z\mid X=x$, but the main contribution here is creating an interpretable score to measure reliability of a prediction, plus a  flagging layer that
turns any predictive CDF for the realized loss into an acceptance rule with
explicit guarantees. 

\section{Methodology}
\label{sec:method}

Recall the deployed predictor $g$, loss $L$, and realized loss
$Z=L(g(X),Y)$ (Section~\ref{sec:intro}). Let $(X,Y)$ denote an independent
deployment draw, and let
$D=\{(X_i,Y_i)\}_{i=1}^n$ be an i.i.d.\ calibration sample from the same
deployment distribution, independent of the data used to train $g$.
Define the observed calibration losses $Z_i := L(g(X_i),Y_i)$.
Using $D$, we construct (i) a calibrated local upper bound $U_\alpha(x)$ for
$Z\mid X=x$, and (ii) flagging rules based on thresholding $U_\alpha(x)$.

Our framework has two layers. First, we build a base predictive
distribution for the loss $Z\mid X=x$ using a probabilistic model fitted on
part of the calibration data. Second, we apply a  distribution-free
calibration step on held-out data to convert the base loss predictive distribution
into a valid local upper bound $U_\alpha(x)$ that satisfies a finite-sample
marginal guarantee, regardless of whether the base model is correctly specified. In the next section we detail these steps.

\subsection{Constructing a calibrated local loss bound}
\label{sec:method-U}

\subsubsection*{Step 1: Data split}
\label{sec:method-split}

We split the calibration sample into two independent parts:
\[
D_1 = \{(X_i,Y_i)\}_{i\in I_1}
\quad \text{and}\quad
D_2 = \{(X_i,Y_i)\}_{i\in I_2},
\]
The set $D_1$ is
used to fit a predictive model for $Z\mid X$, while $D_2$ is used to calibrate.
We form the induced loss data
\[
\widetilde D_1 := \{(X_i,Z_i): i\in I_1\}
\ \text{and} \ 
\widetilde D_2 := \{(X_i,Z_i): i\in I_2\}.
\]

\subsubsection*{Step 2: Base predictive model for $Z\mid X$}
\label{sec:method-base-cdf}

Here we propose two constructions for such a model, though step 3 is agnostic to how step 2 is performed.
On $\widetilde D_1$ we fit a Bayesian model for $Z$ given $X$.
We assume a family of conditional densities
\[
\mathcal{F} = \{f(z\mid x,\theta): \theta\in\Theta\},
\]
 modeling the aleatoric variability of $Z\mid X=x$, and place a prior on $\theta$
($\Theta$ can be infinite-dimensional; i.e., this can be a nonparametric model).
After observing $\widetilde D_1$, we obtain the posterior
$f(\theta\mid \widetilde D_1)$ and the corresponding posterior predictive CDF
\begin{align}
  F_Z(z\mid x,D_1)
  &:= \Prbb(Z\le z\mid x,D_1)\notag \\
   &= \int_\Theta F_Z(z\mid x,\theta)\,f(\theta\mid \widetilde D_1)\,d\theta,
  \label{eq:predictive-cdf}
\end{align}
where $F_Z(z\mid x,\theta)$ is the CDF induced by $f(z\mid x,\theta)$.
Finally, define $
\widetilde F(\cdot\mid x)\equiv F_Z(\cdot\mid x,D_1).$

In practice, the posterior may not be computed in closed form. Instead, we can approximate
\eqref{eq:predictive-cdf} using either Monte Carlo aggregation over posterior draws
$\theta^{(1)},\dots,\theta^{(S)}\sim f(\theta\mid \widetilde D_1)$ (e.g.,\ Bayesian additive regression trees (BART),
Monte Carlo dropout, Bayesian neural networks) or  a variational approximation
(e.g.,\ variational Gaussian processes).

\paragraph{Alternative definition of $\widetilde F$.}
Posterior averaging in \eqref{eq:predictive-cdf} may fail to reflect a higher probability of loss in data-scarce
or extrapolative regions \citep{ovadia2019can,Wang2022LocalGPBART}. To explicitly encode epistemic uncertainty, we also define
a \emph{density-modulated posterior envelope} for the predictive CDF.
Given posterior draws $\{\theta^{(s)}\}_{s=1}^S$ and a trimming level
$\gamma(x)\in(0,1)$ (we discuss how to choose it later), define
\begin{equation}
\widetilde F(z\mid x)
:=
\mathrm{quantile}_{\,\gamma(x)}
\Bigl(
\bigl\{ F(z\mid x,\theta^{(s)}) \bigr\}_{s=1}^S
\Bigr).
\label{eq:inflated-cdf}
\end{equation}
For each fixed $x$, $\widetilde F(\cdot\mid x)$ is a valid  
CDF in $z$. Smaller $\gamma(x)$ corresponds to a more conservative (lower) envelope
of CDFs (heavier right tail), and therefore larger upper bounds.

\subsubsection*{Step 3: Distribution-free calibration via PIT values}
\label{sec:method-pit}

We calibrate the 
predictive CDF $\widetilde F$ on
$\widetilde D_2$ using PIT (probability integral transform) values \citep{zhao2021diagnostics,dey2025towards}.
For each $(X_i,Z_i)\in\widetilde D_2$, define $W_i := \widetilde F(Z_i\mid X_i).$ 
For a target tail level $\alpha\in(0,1)$, define the calibrated level
\begin{equation*}
  t_{1-\alpha}
  := \mathrm{quantile}_{1-\alpha}\bigl(\{W_i: (X_i,Z_i)\in\widetilde D_2\}\bigr).
  \label{eq:t-cal}
\end{equation*}
Finally, define the local performance score as the calibrated upper loss bound
\begin{equation*}
  U_\alpha(x)
  := \widetilde F^{-1}\bigl(t_{1-\alpha}\mid x\bigr).
  \label{eq:U-def}
\end{equation*}

\paragraph{Interpretation of $U_\alpha(x)$: conformalized loss quantile.}
For a fixed $\alpha$, the score $U_\alpha(x)$ is a local measure of risk:
large values signal inputs where the model may incur large loss. It is a calibrated estimate of the $1-\alpha$ conditional quantile of the loss at $x$.

The calibration step yields a finite-sample marginal guarantee, regardless of whether the base predictive model is correct. Indeed, the next theorem  shows that, on a new test point, the realized loss falls below the calibrated bound with probability at least $1-\alpha$, and not far from it.

\begin{theorem}[Marginal validity]
\label{thm:coverage}
Under the construction above, if the data points are i.i.d.,
\[
  \Prbb\bigl(Z \le U_\alpha(X)\bigr) \;\ge\; 1-\alpha.
\]
Moreover, 
if ties  between the PIT scores occur with
probability $0$, then
$$
1-\alpha+\frac{1}{|\tilde D_2|+1}\;>\;
 \Prbb\bigl(Z \le U_\alpha(X)\bigr).
$$
\end{theorem}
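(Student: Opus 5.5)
The plan is to reduce the claim to standard split-conformal coverage for the scalar PIT scores. Condition throughout on $D_1$, so that $\widetilde F$ is a fixed function. Then the PIT values $W_i=\widetilde F(Z_i\mid X_i)$, $i\in I_2$, and the test value $W:=\widetilde F(Z\mid X)$ are i.i.d., hence exchangeable. The key link is the equivalence
\[
Z\le U_\alpha(X)=\widetilde F^{-1}(t_{1-\alpha}\mid X)\iff \widetilde F(Z\mid X)\le t_{1-\alpha}.
\]
We use the generalized inverse $\widetilde F^{-1}(t\mid x)=\inf\{z:\widetilde F(z\mid x)\ge t\}$ and the right-continuity of the CDF $\widetilde F(\cdot\mid x)$. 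This is the usual Galois-connection property. It holds for both constructions in Step~2, since the envelope \eqref{eq:inflated-cdf} is stated to be a valid CDF; right-continuity of a pointwise empirical quantile of right-continuous nondecreasing functions is what I would check briefly. The event of interest is therefore exactly $\{W\le t_{1-\alpha}\}$.

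Next, fix the quantile convention. Write $m=|\widetilde D_2|$ and take $t_{1-\alpha}$ to be the $\lceil(1-\alpha)(m+1)\rceil$-th smallest of $W_1,\dots,W_m$, with the value $+\infty$, i.e.\ $U_\alpha\equiv\infty$, if this index exceeds $m$. This is the finite-sample-corrected version implicit in the paper's quantile. Set $k=\lceil(1-\alpha)(m+1)\rceil$.

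For the lower bound, use exchangeability of $(W_1,\dots,W_m,W)$. The rank of $W$ among the $m+1$ values is stochastically dominated by a uniform on $\{1,\dots,m+1\}$ when ties are counted favorably. Moreover, $W\le W_{(k)}$ holds whenever $W$ is among the $k$ smallest of the augmented sample. Hence $\Prbb(W\le t_{1-\alpha}\mid D_1)\ge k/(m+1)\ge 1-\alpha$, and integrating over $D_1$ gives the first inequality.

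For the upper bound, assume ties occur with probability zero. Then the rank of $W$ is exactly uniform on $\{1,\dots,m+1\}$, and $W\le W_{(k)}$ holds iff that rank is at most $k$. So the probability equals $k/(m+1)<(1-\alpha)(m+1)/(m+1)+1/(m+1)$, using $\lceil a\rceil<a+1$, and averaging over $D_1$ preserves the strict inequality.

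The main obstacle is the bookkeeping in the first step. The inverse-CDF equivalence must hold exactly, not just almost surely, and it must respect the quantile convention. One also has to handle point masses of $Z$, for example $Z=0$ under squared loss, which make $\widetilde F$ have jumps. That case is why the no-ties assumption is needed only for the upper bound.
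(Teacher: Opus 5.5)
Your skeleton matches the paper's proof. You condition on $D_1$, use exchangeability of the PIT values, take $k=\lceil(1-\alpha)(m+1)\rceil$, get $\Prbb(W\le t_{1-\alpha}\mid D_1)\ge k/(m+1)$ in general and $=k/(m+1)$ without ties, and then transfer back to the loss scale.

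\textbf{The gap is in the transfer.} You rightly call it the crux, but you justify it with a misstated Galois connection. For $\widetilde F^{-1}(t\mid x)=\inf\{z:\widetilde F(z\mid x)\ge t\}$ and right-continuous $\widetilde F$, the connection is $\widetilde F^{-1}(t\mid x)\le z\iff t\le\widetilde F(z\mid x)$. Equivalently, $z<\widetilde F^{-1}(t\mid x)\iff\widetilde F(z\mid x)<t$, with strict inequalities on both sides. The version you use, with two non-strict inequalities, is false in both directions.

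\textbf{Flat stretches break your lower bound.} Suppose $\widetilde F(\cdot\mid x)$ reaches level $t$ at $a$ and stays there on $[a,b]$. Then every $z\in(a,b]$ has $\widetilde F(z\mid x)\le t$ but $z>a=\widetilde F^{-1}(t\mid x)$. This breaks exactly the inclusion $\{W\le t_{1-\alpha}\}\subseteq\{Z\le U_\alpha(X)\}$ that your lower bound needs, and the conclusion itself then fails. Take $X$ degenerate, $Z\sim\mathrm{Unif}[0,3]$, $\alpha=1/2$, and the continuous CDF $\widetilde F(z)=z/2$ on $[0,1]$, $1/2$ on $[1,2]$, $(z-1)/2$ on $[2,3]$. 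The PIT has an atom of mass $1/3$ at $1/2$, where its CDF jumps from $1/3$ to $2/3$. So $t_{1-\alpha}=1/2$ with probability tending to one, hence $U_\alpha=1$ and $\Prbb(Z\le U_\alpha(X))\to 1/3<1-\alpha$.

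The danger for the lower bound is therefore flat stretches of $\widetilde F$, not atoms of $Z$. Flat stretches create PIT ties, and the first claim carries no no-ties hypothesis to exclude them. This is not exotic: a $\widetilde F$ built from Monte Carlo draws, as in the paper's implementation, is a step function. The paper's proof passes over this step by citing only monotonicity of $\widetilde F(\cdot\mid x)$, so it tacitly assumes a convention under which the equivalence holds. You made the convention explicit and picked the one under which it does not.

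\textbf{Repair for the lower bound.} Define $U_\alpha(x):=\sup\{z:\widetilde F(z\mid x)\le t_{1-\alpha}\}$ instead. Equivalently, keep your inverse but assume $\widetilde F(\cdot\mid x)$ has no flat stretches, so the two inverses coincide. Since $\{z:\widetilde F(z\mid x)\le t\}$ is a down-set, $\widetilde F(Z\mid X)\le t_{1-\alpha}$ forces $Z\le U_\alpha(X)$ identically. Your rank argument then gives the lower bound for every $\widetilde F$; in the example, $U_\alpha=2$ and coverage is $2/3$.

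\textbf{The upper bound has a separate problem.} It needs the reverse inclusion almost surely. With either inverse, that inclusion can fail only on $\{Z=U_\alpha(X)\}$, when $\widetilde F(\cdot\mid X)$ jumps across $t_{1-\alpha}$ at that point. This is a null event if $Z\mid X$ is atomless or $\widetilde F(\cdot\mid x)$ is continuous. PIT no-ties alone does not exclude it, contrary to your closing remark. Let $X\sim\mathrm{Unif}(0,1)$, let $Z\mid X$ be $0$ with probability $1/2$ and $\mathrm{Unif}(0,1)$ otherwise, and let $\widetilde F(z\mid x)=x+(1-x)z$ on $[0,1]$ (zero below $0$). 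The PIT law is continuous. Yet every test point with $X>t_{1-\alpha}$ and $Z=0$ has $U_\alpha(X)=0\ge Z$ while $W=X>t_{1-\alpha}$. The coverage therefore exceeds $k/(m+1)$ by $\tfrac12\Prbb(X>t_{1-\alpha})$, which does not vanish as $m\to\infty$. Your upper-bound step needs one of these extra conditions, and so does the theorem's second claim as stated in the paper.
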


Moreover, under additional assumptions that the estimated loss distribution $\widetilde F(\cdot\mid x)$ becomes accurate as the calibration sample grows and that the true conditional loss behavior is well-behaved (see Appendix~\ref{app:proofs}), $U_\alpha$ is approximately the $1-\alpha$ quantile of the loss function at $x$:

\begin{theorem}[Asymptotic conditional coverage]\label{thm:asymp-coverage}
Under Assumption~\ref{ass:acc}, we have
\[
\Prbb\!\bigl(Z\le U_{\alpha}(x)\mid X=x,\; D\bigr)
\xrightarrow{\Prbb} 1-\alpha
\quad \text{as} \ n \to\infty,
\]
where the left-hand side is a random variable (randomness from $D$) and
$\xrightarrow{\Prbb}$ denotes convergence in probability with respect to $D$. 
\end{theorem}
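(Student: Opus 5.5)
We assume Assumption~\ref{ass:acc} provides the following, and we would state these explicitly at the start of the proof. (i) The true conditional CDF $F(\cdot\mid x)$ of $Z\mid X=x$ is continuous, so that $F(Z\mid X)\sim\mathrm{Unif}(0,1)$. (ii) The estimate $\widetilde F$ is consistent both at the target point $x$ and on average over $X$:
\[
\sup_z\bigl|\widetilde F(z\mid x)-F(z\mid x)\bigr|\xrightarrow{\Prbb}0
\quad\text{and}\quad
\E_X\Bigl[\sup_z\bigl|\widetilde F(z\mid X)-F(z\mid X)\bigr|\Bigr]\xrightarrow{\Prbb}0 .
\]
(iii) $\widetilde F(\cdot\mid x)$ is continuous, so that $\widetilde F(\widetilde F^{-1}(t\mid x)\mid x)=t$. (iv) Both $|I_1|$ and $|I_2|$ tend to infinity.

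The starting observation is that $(X,Z)$ is independent of $D$. Hence
\[
\Prbb\bigl(Z\le U_\alpha(x)\mid X=x,D\bigr)=F\bigl(U_\alpha(x)\mid x\bigr).
\]
The proof then reduces to showing $F(U_\alpha(x)\mid x)\xrightarrow{\Prbb}1-\alpha$. We split the error as
\[
F(U_\alpha(x)\mid x)-(1-\alpha)
=\bigl[F(U_\alpha(x)\mid x)-\widetilde F(U_\alpha(x)\mid x)\bigr]+\bigl[t_{1-\alpha}-(1-\alpha)\bigr],
\]
using $\widetilde F(U_\alpha(x)\mid x)=t_{1-\alpha}$ from (iii). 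The first bracket is bounded by $\sup_z|\widetilde F(z\mid x)-F(z\mid x)|$, which tends to $0$ in probability by the pointwise part of (ii). So everything rests on showing that the calibrated level $t_{1-\alpha}$ converges in probability to $1-\alpha$.

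For that step, condition on $D_1$, so that $\widetilde F$ is fixed. The PIT values $W_i$, $i\in I_2$, are then i.i.d.\ with some CDF $G_{D_1}(w)=\Prbb(\widetilde F(Z\mid X)\le w\mid D_1)$. Write $\varepsilon(X)=\sup_z|\widetilde F(z\mid X)-F(z\mid X)|$, and note $|W-F(Z\mid X)|\le\varepsilon(X)$. Since $F(Z\mid X)$ is uniform, for any $\delta>0$ we get
\[
|G_{D_1}(w)-w|\le\delta+\Prbb\bigl(\varepsilon(X)>\delta\mid D_1\bigr).
\]
By Markov's inequality and the averaged part of (ii), the last term is $o_{\Prbb}(1)$. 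Hence $\sup_w|G_{D_1}(w)-w|\xrightarrow{\Prbb}0$.

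Next, the DKW inequality applied conditionally on $D_1$ shows that the empirical CDF of $\{W_i\}_{i\in I_2}$ is uniformly within $O_{\Prbb}(|I_2|^{-1/2})$ of $G_{D_1}$. Combining the two bounds, the empirical CDF of the $W_i$ is uniformly close to the identity on $[0,1]$. Because the identity is strictly increasing, closeness of CDFs transfers to closeness of quantiles: $t_{1-\alpha}\xrightarrow{\Prbb}1-\alpha$. The finite-sample offset in the quantile definition (e.g.\ $\lceil(|I_2|+1)(1-\alpha)\rceil/|I_2|$) is $O(1/|I_2|)$ and does not affect the limit.

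I expect the convergence of $t_{1-\alpha}$ to be the main obstacle, because it needs integrated consistency of $\widetilde F$ over the law of $X$, not only consistency at the point $x$. It also requires passing from uniform closeness of CDFs to closeness of quantiles while $\widetilde F$ itself changes with $n$; this is why everything is done conditionally on $D_1$ before unconditioning. If $\widetilde F(\cdot\mid x)$ is not continuous, for example a quantile envelope of finitely many posterior CDFs, then (iii) fails. In that case I would instead sandwich the target as
\[
F\bigl(U_\alpha(x)^-\mid x\bigr)\le F\bigl(U_\alpha(x)\mid x\bigr),
\]
with the jump of $\widetilde F$ at $U_\alpha(x)$ bounded by $2\sup_z|\widetilde F(z\mid x)-F(z\mid x)|$, which again vanishes by (ii) and the continuity of $F$.
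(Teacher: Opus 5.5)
Your proof is correct. Its first half matches the paper's: you condition on $D_1$, show that the conditional law of the PIT values is uniformly close to the identity via the inclusion bound and Markov's inequality, and deduce $t_{1-\alpha}\xrightarrow{\Prbb}1-\alpha$. Your averaged-consistency condition is equivalent to (A1) because $\Delta_{n_1}\in[0,1]$. Using DKW in place of the paper's conditional Glivenko--Cantelli is, if anything, cleaner: the law of the PIT values changes with $n$, and DKW's bound is distribution-free.

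The routes diverge in the last step. The paper first proves the stronger fact $U_\alpha(x)\xrightarrow{\Prbb}q_{1-\alpha}(x)$ by continuity of the quantile map, which uses uniqueness of $q_{1-\alpha}(x)$ from (A3)(ii). It then applies the continuous mapping theorem to $F_{Z\mid X}(\cdot\mid x)$. You instead bound $|F_{Z\mid X}(U_\alpha(x)\mid x)-t_{1-\alpha}|$ directly by the sup-norm error of $\widetilde F$ at $x$. Your route needs neither uniqueness of the quantile nor any argument about inverses of a CDF that moves with $n$. However, it does not give $U_\alpha(x)\to q_{1-\alpha}(x)$, which the paper reuses in Theorem~\ref{thm:pointwise-decision-consistency} and in reading $U_\alpha$ as a loss-quantile estimate.

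Your extra assumption (iii) is not part of Assumption~\ref{ass:acc}, and you can drop it. For the generalized inverse one always has $\widetilde F(U_\alpha(x)^-\mid x)\le t_{1-\alpha}\le \widetilde F(U_\alpha(x)\mid x)$, at least on the event $t_{1-\alpha}\in(0,1)$, whose probability tends to one. Continuity of $F_{Z\mid X}(\cdot\mid x)$ gives $F_{Z\mid X}(U_\alpha(x)\mid x)=F_{Z\mid X}(U_\alpha(x)^-\mid x)$. Together these yield $|F_{Z\mid X}(U_\alpha(x)\mid x)-t_{1-\alpha}|\le\sup_z|\widetilde F(z\mid x)-F_{Z\mid X}(z\mid x)|$ directly. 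This is the clean form of your sandwich remark, and it matters because sample-based implementations of $\widetilde F$ are step functions.
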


\subsubsection{Choosing $\gamma(x)$ via kNN radius}
\label{sec:gamma-knn}

To better encode epistemic uncertainty, we let the trimming level $\gamma(x)$ depend on how well-supported $x$ is by the design points in $D_1$. Intuitively, when $x$ lies in a data-scarce region, posterior draws for $Z\mid X=x$ are less reliable, so we choose a smaller $\gamma(x)$ and take a more conservative lower envelope in \eqref{eq:inflated-cdf}, which inflates the resulting bound $U_\alpha(x)$. Below we give a simple, scalable way to compute such a $\gamma(x)$ from a local density proxy based on $k$-nearest-neighbor distances in a representation space.

Let $\phi:\mathcal X\to\mathbb R^d$ be a representation in which distances are
meaningful. For tabular data, $\phi(x)$ can be standardized covariates. For
high-dimensional inputs, we can use  an embedding from the predictor $g$
(e.g.\ penultimate layer of a neural-network) or from the model used to predict $Z\mid X$.

First, we build a  nearest neighbor index on
$\{\phi(X_i): i\in I_1\}$. Fix $k$ (e.g.\ $k=50$). For any $x$, define the kNN
radius
\[
r_k(x) := \|\phi(x)-\phi(X_{(k)}(x))\|,
\]
where $X_{(k)}(x)$ is the $k$-th nearest neighbor of $x$ among $\{X_i:i\in I_1\}$.
Larger $r_k(x)$ indicates lower local design density.
 Next, compute reference quantiles on $D_1$ via
$
q_{\mathrm{lo}} := \mathrm{quantile}_{0.50}(\{r_k(X_i): i\in I_1\})$ and
$q_{\mathrm{hi}} := \mathrm{quantile}_{0.90}(\{r_k(X_i): i\in I_1\}).$
Now, define the standardized scarcity score
\[
s(x) := \frac{r_k(x)-q_{\mathrm{lo}}}{q_{\mathrm{hi}}-q_{\mathrm{lo}}+\varepsilon},
\]
for a small value of $\varepsilon$ (we set it to $0.000001$). Then set
\begin{equation}
\gamma(x)
=
\gamma_{\max}
-
(\gamma_{\max}-\gamma_{\min})
\cdot
\sigma\!\left(\frac{s(x)-m}{s_\gamma}\right),\label{eq:gamma-logistic}
\end{equation}
where $\sigma$ is the logistic function.
 We choose $k=50$, $\gamma_{\min}=0.15$,
$\gamma_{\max}=0.9$, $m=0$, and $s_\gamma=1$. With this choice,
$\gamma(x)$ is smaller in low-density regions (large $r_k(x)$), yielding a more
conservative envelope in \eqref{eq:inflated-cdf} and larger bounds $U_\alpha(x)$.

\subsection{Flagging rules based on $U_\alpha(x)$}
\label{sec:method-flagging}

We now show how to turn the local score $U_\alpha(x)$ into practical flagging
rules that control large losses at the cohort level.
Given a threshold $\lambda\in\mathbb{R}$, we define the unflagged (accepted) region
\[
A_{\lambda;\alpha} := \{x\in\mathcal{X}: U_\alpha(x)\le \lambda\},
\]
and its complement, the flagged region
$
A_{\lambda;\alpha}^{\mathrm{c}} := \{x : U_\alpha(x) > \lambda\}.
$
%and the corresponding acceptance rate
%$
%p_{A_{\lambda;\alpha}} := \Prbb\bigl(X\in A_{\lambda;\alpha}\bigr).
%$ 
We assume the user specifies  a tolerance $\tau$ on the loss scale, which defines what
counts as unacceptable error.
 In what follows, we consider two ways of choosing $\lambda$, corresponding to two
variants of loss-controlled flagging.

\subsubsection{Flagging with $\lambda = \tau$}
\label{sec:lambda-equals-T}

The simplest choice is to set the acceptance threshold equal to the loss
threshold, $\lambda = \tau$, in which case the acceptance region is $
  A_{\tau;\alpha} := \{x: U_\alpha(x)\le \tau\}.
$ In this case we obtain the following guarantee:

\begin{theorem}
\label{thm:loss-control-A}
Under the construction above,
\[
  \Prbb\bigl(Z>\tau,\;X\in A_{\tau;\alpha}\bigr) \le \alpha.
\]
\end{theorem}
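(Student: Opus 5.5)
The plan is a one-line set inclusion followed by an appeal to Theorem~\ref{thm:coverage}. On the event $\{X\in A_{\tau;\alpha}\}$ we have, by definition of the acceptance region, $U_\alpha(X)\le \tau$. So if in addition $Z>\tau$, then $Z>\tau\ge U_\alpha(X)$, i.e.\ $Z>U_\alpha(X)$. This gives the inclusion of events
\[
\{Z>\tau,\;X\in A_{\tau;\alpha}\}\subseteq \{Z>U_\alpha(X)\}.
\]
The inclusion holds pointwise on the joint probability space of the test pair $(X,Y)$ and the calibration sample $D$. This matters because $A_{\tau;\alpha}$ is itself random through $U_\alpha$, which depends on $D_1$ and $D_2$.

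Taking probabilities, and using monotonicity of $\Prbb$ over this joint randomness, gives
\[
\Prbb\bigl(Z>\tau,\;X\in A_{\tau;\alpha}\bigr)\le \Prbb\bigl(Z>U_\alpha(X)\bigr)=1-\Prbb\bigl(Z\le U_\alpha(X)\bigr)\le \alpha.
\]
The final inequality is exactly the first part of Theorem~\ref{thm:coverage}, the marginal validity result. That result requires only that the calibration points and the test point are i.i.d., which is assumed in the construction. No tie-breaking or continuity assumption is needed here, since we use only the lower bound.

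The one step that needs care is keeping the probability statement at the right level of conditioning. The guarantee is marginal over $D$ and $(X,Y)$ jointly, exactly as in Theorem~\ref{thm:coverage}, and not conditional on $D$. I would state explicitly that $\Prbb$ in the theorem is the same joint probability used in Theorem~\ref{thm:coverage}. I would also check that the definition of $U_\alpha$ through the generalized inverse $\widetilde F^{-1}$ causes no issue: the inclusion above uses only the real-valued number $U_\alpha(X)$, so it is unaffected. If $U_\alpha(X)$ could equal $+\infty$, then $X\notin A_{\tau;\alpha}$ and the inclusion holds trivially.
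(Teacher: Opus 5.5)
Your proposal is correct and follows essentially the same route as the paper's proof. The paper likewise uses the inclusion $\{Z>\tau,\;X\in A_{\tau;\alpha}\}\subseteq\{Z>U_\alpha(X)\}$ and then bounds $\Prbb(Z>U_\alpha(X))\le\alpha$ via the marginal validity part of Theorem~\ref{thm:coverage}; your explicit remark that the probability is joint over the calibration data and the test point is a helpful clarification the paper leaves implicit.
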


The joint bound controls the overall mass of ``trusted but bad'' predictions:
among all future cases, the fraction that are both unflagged and incur loss
above $\tau$ is at most $\alpha$.  

Moreover, the next theorem shows that, for large  calibration sample sizes,   the simple rule ``trust  the
prediction at $x$ if $U_\alpha(x)\le \tau$'' is approximately equivalent to the
oracle rule that would trust $x$ only when the probability of an unacceptable
error at $x$ is at most $\alpha$.  That is, thresholding $U_\alpha$
eventually makes the right flag decision pointwise.

\begin{theorem}
\label{thm:pointwise-decision-consistency}
Fix $\alpha\in(0,1)$, a threshold $\tau\in\mathbb{R}$, and a point $x\in\mathcal X$.
Assume that $F_{Z\mid X}(\cdot\mid x)$ is continuous and that its $(1-\alpha)$-quantile
$
q_{1-\alpha}(x):=F^{-1}_{Z\mid X}(1-\alpha\mid x)
$
is unique. Assume also that $U_{\alpha}(x)\xrightarrow{\Prbb} q_{1-\alpha}(x)$ as $n\to\infty$ (which holds under the conditions of Theorem \ref{thm:asymp-coverage})
and that  $F_{Z\mid X}(\cdot\mid x)$ is continuous.
  Then, the decision rule ``accept iff $U_{\alpha}(x)\le\tau$'' is equivalent to the oracle rule
``accept iff $\Prbb(Z>\tau\mid X=x)\le\alpha$''.
\end{theorem}

Finally, Theorem~\ref{thm:asymp-coverage} suggests that $U_\alpha(x)$ behaves like a $(1-\alpha)$ conditional loss quantile, and the selection argument in Theorem~\ref{thm:selection_tail_control} (Appendix~\ref{app:proofs}) indicates that conditioning on acceptance should preserve this tail level. Hence, we expect the exceedance probability among accepted points to be aproximately bounded by $\alpha$:
\[
\Prbb\bigl(Z>\tau \mid X\in A_{\tau;\alpha}\bigr)\;\lesssim\;\alpha.
\]
In words, the accepted set should have an approximately $\alpha$-controlled rate of large losses.

\subsubsection{\locusLambda: Tuning $\lambda$}
\label{sec:locus-lambda}

One would often also like the accepted set to satisfy a target
\emph{conditional} exceedance level $\eta$, namely
\[
q(\lambda)
:= \Prbb(Z>\tau \mid X\in A_{\lambda;\alpha})
\approx \eta.
\]
If $U_\alpha(x)$ were exactly the $(1-\alpha)$ conditional quantile of $Z\mid X=x$,
then choosing $\alpha=\eta$ and setting $\lambda=\tau$ would approximately achieve
$q(\tau)\approx \eta$. In finite samples,
however, marginal calibration does not guarantee that the induced accepted set
satisfies a desired  conditional exceedance rate. When an additional labeled
validation set is available, we can tune the acceptance
threshold $\lambda$ to match the target $\eta$.

Concretly, let $D_{\mathrm{val}}=\{(X_i,Y_i)\}_{i=1}^N$ be i.i.d.\ and independent of the data
used to construct $U_\alpha(\cdot)$, and define validation losses
$Z_i := L(g(X_i),Y_i)$.
Compute validation scores once,
\[
u_i := U_\alpha(X_i),\qquad i=1,\dots,N,
\]
and consider a finite grid of thresholds $\Lambda\subset\mathbb{R}$.
For each $\lambda\in\Lambda$, define the accepted index set and its size
\[
I_\lambda := \{i: u_i\le \lambda\},
\qquad
n_\lambda := |I_\lambda|,
\]
and estimate the conditional exceedance rate among accepted validation points by
\[
\widehat q(\lambda)
:=
\frac{1}{n_\lambda \,\vee\, 1}\sum_{i\in I_\lambda}\1\{Z_i>\tau\}.
\]
A simple selection rule is
\[
\widehat\lambda \in \arg\min_{\lambda\in\Lambda}\,|\widehat q(\lambda)-\eta|.
\]
In practice, it is often useful to additionally enforce a minimum acceptance
constraint (to avoid unstable estimates when $n_\lambda$ is too small), e.g.,
restricting to  
$\{n_\lambda/N\ge \rho_{\min}\}$.

This tuning step is heuristic. If one requires a fully distribution-free
validation-based choice of $\lambda$ with a high-probability guarantee that
$\Prbb(Z>\tau \mid X\in A_{\widehat\lambda;\alpha})\le \eta$ over a finite grid,
see Appendix~\ref{app:distribution-free}. A complementary heuristic that tunes
$\alpha$ while fixing $\lambda=\tau$ (\locusAlpha{}) is provided in the appendix
(Section~\ref{app:locus-alpha}).

\section{Experiments}

We test our methods on 13 datasets; see Appendix~\ref{app:exp-details} for a full description.
Each dataset is split into train (40\%), calibration (40\%), validation (10\%) and testing (10\%) portions. We first train a fixed  prediction function $g$ via a Random Forest   \citep{breiman2001random}. Using the calibration set, we fit a probabilistic model for
the realized loss $Z=L(g(X),Y)$ and apply the split calibration procedure from
Section~\ref{sec:method-U} to obtain the calibrated upper loss bound $U_\alpha(x)$.
We instantiate \locus{} with   the following predictive loss
distributions $\widetilde F(\cdot\mid x)$:
MC-dropout using a mixture density network \citep{bishop1994mdn} and
 Bayesian additive regression trees \citep{Chipman2010BART}. We also present the results for the epistemic-aware $\gamma(x)$-inflated variants of these.
For simplicity, for each dataset we set the tolerance level $\tau$ (which defines unacceptable error) so that 30\% of the validation observed losses exceed it, i.e., such that $\Prbb(Z>\tau) \approx 0.3$.
Implementation details are in Appendix~\ref{app:impl-details}.

\subsection{$U_\alpha$ and  Simple flagging}

First, we illustrate why the loss-calibrated score $U_\alpha$ is more interpretable for per-instance reliability than standard uncertainty proxies, and we also demonstrate the default \locus{} flagger with $\lambda=\tau$.
We focus on the \texttt{homes} dataset (King County house prices).
In this experiment we take $\tau=\$70{,}000$ as the operational threshold for an unacceptable absolute error in price, and we estimate $\widetilde F$ using MC Dropout.
As a reference uncertainty baseline, we also report $\widehat{sd}(Y\mid x)$, an estimate of the conditional standard deviation of the \emph{label} given the features. Details are in Section \ref{sec:locus_tuned} and Appendix~\ref{app:impl-details}.

Table~\ref{tab:sampled_examples} contrasts $U_\alpha$ with $\widehat{sd}(Y\mid x)$ and an outlier flag from Isolation Forest (baseline OOD detector; \citet{liu2008isolation}). The key distinction is interpretability on the \emph{loss scale}. While $\widehat{sd}(Y\mid x)$ measures dispersion of $Y\mid x$, it does not indicate how large the realized loss $Z=|g(x)-y|$ can be for that instance, since it does not account for the local quality of the fit of $g$.
For instance, the third row has a realized absolute error of $Z\approx\$187{,}830$, even though the estimated label variability is only $\widehat{sd}(Y\mid x)\approx\$17{,}150$; this illustrates that relatively low dispersion in $Y\mid X=x$ can still coincide with a very large loss when $g$ is locally misaligned. In the same row, the loss-quantile summaries make the risk scale explicit: $U_{0.5}(x)\approx\$38{,}640$ suggests a typical error on the order of tens of thousands of dollars, while $U_{0.1}(x)\approx\$188{,}920$ indicates that six-figure errors are plausible in the upper tail for that specific input.

\begin{table*}[!h]
\caption{\textbf{Example home-price predictions.}
Six test instances from house price predictions; all monetary quantities are in thousands of US dollars. We report $\hat y=g(x)$, $y$, and $Z=|\hat y-y|$, alongside \locus{} bounds $U_{0.5}(x)$ and $U_{0.1}(x)$ (approx.\ conditional median and 90th percentile of the loss at each $x$). The \locus{} $\alpha=10\%$ flag uses $\tau=\$70{,}000$ (flag iff $U_{0.1}(x)>\tau$); IF denotes an Isolation Forest outlier flag; $\widehat{sd}(Y\mid X=x)$ is a label-uncertainty baseline.   $U_\alpha(x)$ is interpretable in dollars and can indicate high-loss cases missed by variance or OOD-style scores.}
\centering
\small
\begin{tabular}{ccrrrrrr}
\toprule
\multicolumn{1}{c}{\makecell{\textbf{Flagged as} \\ \textbf{unreliable}\\ \textbf{by \locus{}?}}} &
\multicolumn{1}{c}{\makecell{\textbf{Flagged as}\\ \textbf{outlier}\\ \textbf{by IF?}}} &
\multicolumn{3}{c}{\makecell{\textbf{Outcomes}\\ \textbf{(thousands of US\$)}}} &
\multicolumn{2}{c}{\makecell{\textbf{\locus{}}\\ \textbf{(thousands of US\$)}}} &
\multicolumn{1}{c}{\textbf{Other}} \\
\cmidrule(lr){3-5}\cmidrule(lr){6-7}\cmidrule(lr){8-8}
 &  & Estimated price $\hat y$ & True price $y$ & $Z=|\hat y-y|$ & $U_{0.5}(x)$ & $U_{0.1}(x)$ & $\widehat{sd}(Y\mid X=x)$ \\
\midrule
yes & yes & 2358.64 & 3567.00 & 1208.36 & 294.06 & 1276.52 & 104.75 \\
yes & no & 1648.52 & 2200.00 & 551.48 & 203.00 & 554.59 & 104.07 \\
yes & yes & 337.83 & 150.00 & 187.83 & 38.64 & 188.92 & 17.15 \\
\addlinespace
no & yes & 217.85 & 215.00 & 2.85 & 17.94 & 67.93 & 76.95 \\
no & yes & 208.04 & 219.95 & 11.91 & 17.27 & 43.58 & 25.10 \\
no & no & 306.03 & 319.00 & 12.97 & 24.87 & 67.21 & 31.17 \\
\bottomrule
\end{tabular}
\label{tab:sampled_examples}
\end{table*}

In contrast, the second row shows the complementary failure mode for OOD-style screening: Isolation Forest does  not flag the point as an outlier, yet the realized error is $Z\approx\$551{,}480$ (predicting $\$1{,}648{,}520$ for a home that sold for $\$2{,}200{,}000$). Here, $U_{0.5}(x)\approx\$203{,}000$ and $U_{0.1}(x)\approx\$554{,}590$ again communicate directly—on the same dollar scale as the deployment tolerance—that errors well above $\tau=\$70{,}000$ are expected for this input. 
Finally, the bottom block highlights that \locus{} does not flag instances where losses are expected to be small with high probability. 

Next, we empirically assess whether the default flagging rule controls the rate of large losses among accepted predictions. Appendix Table~\ref{tab:condtail_locus_tau} reports that, with $\lambda=\tau$ and $\alpha=10\%$, \locus{} yields conditional exceedance probabilities that are close to the target across nearly all datasets: $\Prbb\!\bigl(Z>\tau \mid X\in A_{\tau;\alpha}\bigr)\ \lesssim\ \alpha.$ 
In addition, Appendix Table~\ref{tab:marginal_coverage_locus} verifies the marginal calibration property: the empirical coverage $\widehat{\Prbb}\!\bigl(Z\le U_\alpha(X)\bigr)$ is consistently at least $1-\alpha$, as expected from Theorem~\ref{thm:coverage}. For completeness, we also provide the \locus{} acceptance rate in the Appendix Table \ref{tab:acceptance_locus_tau}.

\begin{table*}[!h]
\centering
\caption{\textbf{Tuned conditional large-loss rate at matched acceptance.}
Conditional exceedance among accepted points, $\Prbb(Z>\tau \mid X\in A_\lambda)$ (\%).
For each dataset and method, the threshold $\lambda$ is selected on the validation split
to achieve an acceptance rate of $\approx 70\%$; test performance is then reported.
\locusLambda{} methods use the \locus{} score $U_\alpha$ with fixed calibration level $\alpha=10\%$.
Entries are the median over 30 runs, with the 5th and 95th percentiles in parentheses; lower is better.
Bold marks the lowest value in each row.}
\label{tab:condtail_tuned}
\scriptsize
\begin{tabular}{lcc cccc}
\toprule
& \multicolumn{2}{c}{\textbf{Baselines (no \locus{})}} & \multicolumn{4}{c}{\textbf{\locusLambda{} (loss-quantile score)}} \\
\cmidrule(lr){2-3}\cmidrule(lr){4-7}
\textbf{Dataset}
& \makecell{\textbf{IFlag}\\\footnotesize(IF outlier)}
& \makecell{\textbf{VARNet}\\\footnotesize(label var.)}
& \makecell{\textbf{\locus{}-BART}}
& \makecell{\textbf{\locus{}-BART}\\\footnotesize($\gamma$-infl.)}
& \makecell{\textbf{\locus{}-MC Dropout}}
& \makecell{\textbf{\locus{}-MC Dropout}\\\footnotesize($\gamma$-infl.)} \\
\midrule
airfoil & 31.0 (27.4; 35.3) & 27.5 (23.0; 30.7) & 24.7 (19.6; 28.0) & \textbf{23.1 (18.4; 28.2)} & 25.0 (18.6; 28.1) & 24.3 (19.4; 27.5) \\
bike & 30.1 (30.1; 30.1) & 28.4 (26.1; 29.2) & 20.1 (19.4; 21.6) & \textbf{20.0 (19.1; 20.8)} & 22.6 (21.8; 23.7) & 22.9 (21.7; 24.0) \\
concrete & 33.5 (27.2; 38.6) & 27.7 (23.0; 31.2) & \textbf{25.3 (18.7; 29.3)} & 26.2 (20.5; 30.4) & 27.7 (21.5; 31.1) & 27.4 (22.7; 32.4) \\
cycle & 28.9 (27.2; 30.0) & 29.7 (28.5; 30.2) & \textbf{27.5 (26.4; 28.6)} & 28.0 (26.6; 28.8) & 28.7 (26.8; 30.4) & 29.2 (27.9; 30.1) \\
electric & 31.5 (30.1; 32.1) & 29.4 (28.1; 30.7) & 24.2 (22.3; 25.4) & \textbf{24.0 (22.5; 26.2)} & 24.6 (22.3; 26.4) & 24.6 (22.9; 25.8) \\
homes & 34.8 (33.7; 36.0) & 26.6 (24.8; 27.7) & \textbf{18.8 (18.2; 20.2)} & 19.3 (17.5; 20.2) & 19.2 (17.9; 20.3) & 19.0 (17.9; 20.3) \\
meps19 & 36.4 (35.4; 37.4) & 24.8 (21.5; 26.6) & 15.7 (14.0; 17.4) & 15.7 (13.2; 17.4) & 14.3 (12.6; 15.2) & \textbf{14.0 (12.5; 15.1)} \\
protein & 28.0 (27.4; 29.0) & 27.4 (25.8; 29.4) & 23.9 (22.8; 24.9) & 23.9 (23.2; 25.0) & 23.6 (22.5; 24.3) & \textbf{23.5 (22.5; 24.4)} \\
star & 29.1 (26.3; 31.9) & 29.0 (26.9; 33.8) & 29.0 (26.3; 31.7) & 28.8 (26.4; 32.2) & \textbf{28.6 (24.3; 32.4)} & 29.3 (26.8; 32.5) \\
superconductivity & 24.0 (23.1; 25.3) & 25.8 (21.3; 28.1) & 18.1 (17.0; 19.7) & \textbf{18.0 (16.5; 19.5)} & 18.8 (17.7; 20.9) & 18.8 (17.8; 20.3) \\
wec & 34.8 (33.5; 35.3) & 23.0 (21.4; 24.6) & 15.2 (14.1; 16.6) & 14.9 (13.3; 16.0) & 8.9 (7.9; 10.3) & \textbf{8.8 (8.3; 9.8)} \\
winered & 30.2 (26.6; 32.9) & 28.8 (24.9; 32.9) & 26.7 (23.3; 30.8) & \textbf{26.4 (23.1; 29.7)} & 27.4 (22.8; 31.5) & 27.2 (22.0; 31.1) \\
winewhite & 29.9 (28.2; 31.5) & 30.3 (28.3; 31.7) & 27.4 (25.8; 29.5) & \textbf{26.8 (24.5; 29.1)} & 28.2 (25.5; 29.0) & 27.8 (24.9; 29.7) \\

\bottomrule
\end{tabular}
\end{table*}

\subsection{$\locusLambda$}
\label{sec:locus_tuned}

In this section, we compare \locusLambda{}---which fixes $\alpha$ (we use $\alpha=10\%$) and tunes the acceptance threshold $\lambda$ on a validation split---to two flagging baselines. (i) \textbf{IFlag}: an Isolation Forest anomaly score $O(x)$. (ii) \textbf{VARNet}: a estimate of label variance, $V(Y\mid X=x)$. 
Appendix \ref{app:baseline} presents implementation details for VARNet.

Let $S(x)$ be the score ($U_\alpha(x)$ for \locus{}, $\widehat V(Y\mid X=x)$ for VARNet, and $O(x)$ for IFlag), and define $A_\lambda := \{x: S(x)\le \lambda\}.$
For each dataset and method, we choose $\lambda$ so that $\widehat{\Prbb}(X\in A_\lambda)\approx 70\%$ on validation, then report $\Prbb(Z>\tau\mid X\in A_\lambda)$ on test. With acceptance matched, lower $\Prbb(Z>\tau\mid X\in A_\lambda)$ indicates better risk ranking.

Table \ref{tab:condtail_tuned} reports the median of 30 runs for $
\Prbb(Z>\tau\mid X\in A).
$ across all
13 datasets (with variability measurement in the form of quantiles $5\%$ and $95\%$ in parentheses).  The table shows that, at a matched acceptance rate of about $70\%$, the \locusLambda{} scores yield substantially lower conditional large-loss rates than both baselines on every dataset. Isolation-Forest flagging is consistently the weakest, and the label-variance baseline improves on it but remains well above the best loss-quantile scores.  Among \locusLambda{} variants, the winner alternates across datasets: BART (or its $\gamma$-inflated version) is strongest on several tabular benchmarks, while MC Dropout with $\gamma(x)$ inflation often performs best or ties for best.

Table \ref{tab:acceptance_locus_lambda} presents the empirical acceptance rates in the test split. We note that the $70\%$ acceptance rate restriction is approximately respected.

Alternatively, it's possible to tune the $\alpha$ parameter, which shows similar results to \locusLambda{}. Appendix \ref{app:locus-alpha} describes this procedure. Metrics $P(X\in A)$ and $P(Z>\tau | X\in A)$ are presented, respectively, in Appendix Tables \ref{tab:acceptance_locus_alpha} and \ref{tab:condtail_locus_alpha}.

\section{Final Remarks}

\locus{} outputs a single per-input quantity, the calibrated loss score
$U_\alpha(x)$, on the same scale as the deployed loss $Z=L(g(X),Y)$.
$U_\alpha$ is informative \emph{on its own}: it provides an interpretable,
decision-relevant risk signal for ranking, triage,  and resource allocation,
often without committing \emph{a priori} to a fixed accept/reject operating point.

By construction, $U_\alpha$ satisfies the  distribution-free  
guarantee $\Prbb\!\bigl(Z \le U_\alpha(X)\bigr)\ge 1-\alpha$ and, under mild   conditions on the
base loss model, $U_\alpha(x)$ converges to the conditional $(1-\alpha)$-quantile
of $Z\mid X=x$.

When a binary decision is needed, a transparent default  accepts
when $U_\alpha(x)\le\tau$ and guarantees explicit control of ``trusted-but-bad'' events:
$\Prbb\bigl(Z>\tau,\;U_\alpha(X)\le\tau\bigr)\le\alpha$ and, approximately,
 $\Prbb\bigl(Z>\tau \mid X\in A_{\tau;\alpha}\bigr)\;\lesssim\;\alpha$  (as the experiments corroborate).
 Alternatively, one may tune $\alpha$ or $\lambda$ to control errors inside the accepted set.

Our framework is modular: any engine that outputs a predictive CDF for the realized loss
$Z\mid X=x$ can be used in Step~2 (e.g., normalizing flows \citep{papamakarios2021normalizing},
FlexCode \citep{izbicki2017flexcode}, or mixture density networks \citep{bishop1994mdn} or others \citep{izbicki2016nonparametric,dalmasso2020conditional}),
while Step~3 conformalizes it into a valid upper bound. The optional $\gamma(x)$-inflated
envelope increases conservativeness in data-scarce regions without changing either the
calibration step or the finite-sample guarantees. Computationally, once the loss model
$\widetilde F(\cdot\mid x)$ is fit and the calibration PIT values are computed, varying
$\alpha$ only changes the scalar threshold $t_{1-\alpha}$; the corresponding family
$\{U_\alpha(x)\}_\alpha$ can therefore be obtained with minimal additional overhead, and
evaluating $U_\alpha(x)$ at new inputs is just a CDF inversion. 

Although our presentation focuses on regression, \locus{} only assumes a scalar loss is known. It therefore extends naturally to classification and other cost-sensitive problems as long as the task is governed by a single numerical notion of error.

% The code to implement Locus and reproduce results is available at \url{https://github.com/xxxxxx/xxxxxx} (to be disclose if the article is accepted). 

 \section*{Acknowledgements} % 
The work of M\'ario de Castro is partially funded by CNPq, Brazil (grant 301596/2025-5 ). Rafael Izbicki is grateful for the financial support of CNPq (422705/2021-7, 305065/2023-8 and 403458/2025-0) and FAPESP (grant 2023/07068-1).  Thiago R. Ramos is grateful for the financial support of CNPq (403458/2025-0). Denis Valle was partially supported by the US Department of Agriculture National Institute of Food and Agriculture McIntire–Stennis project 1005163 and US National Science Foundation award 2040819. 
  
% References
\newpage

\bibliography{bib2}

\newpage

\appendix
\title{LOCUS: A Distribution-Free Loss-Quantile Score for  Risk-Aware Predictions\\(Supplementary Material)}
\maketitle

\section{Experimental Details}
\label{app:exp-details}

\subsection{Datasets}
\label{app:datasets}

We use 13 standard regression datasets commonly used in the conformal prediction
literature: \texttt{airfoil} \citep{uci_2017}, \texttt{bike} \citep{bike_mehra_2023}, \texttt{concrete} \citep{uci_2017}, \texttt{cycle} \citep{uci_2017},
\texttt{homes} \citep{homes_kaggle_2016}, \texttt{electric} \citep{uci_2017}, \texttt{meps19} \citep{meps19_ahrq}, \texttt{protein} \citep{uci_2017},
\texttt{star} \citep{bike_mehra_2023}, \texttt{superconductivity} \citep{superconductivity_hamidieh_2018}, \texttt{wec} \citep{wec_neshat_2020}, \texttt{winered} \citep{wine_cortez_2009}, and
\texttt{winewhite} \citep{wine_cortez_2009}. Table~\ref{tab:datasets} summarizes the number of samples and
features.

\begin{table}[h]
\centering
\small
\begin{tabular}{lrr}
\toprule
Dataset & $n$ & $p$ \\
\midrule
Airfoil & 1503 & 5 \\
Bike & 10885 & 12 \\
Concrete & 1030 & 8 \\
Cycle & 9568 & 4 \\
Homes & 21613 & 17 \\
Electric & 10000 & 12 \\
Meps19 & 15781 & 141 \\
Protein & 45730 & 8 \\
Star & 2161 & 48 \\
Superconductivity & 21263 & 81 \\
WEC & 54000 & 49 \\
WineRed & 4898 & 11 \\
WineWhite & 1599 & 11 \\
\bottomrule
\end{tabular}
\caption{Dataset summary (sizes and feature counts).}
\label{tab:datasets}
\end{table}

\subsection{Splits and preprocessing}
\label{app:splits-preproc}

For each dataset, we randomly shuffle the full dataset and split it into
\textbf{40\% train}, \textbf{40\% calibration}, \textbf{10\% validation}, and
\textbf{10\% test}. The train split is used \emph{only} to fit the deployed
predictor $g$. The calibration split is used to build $U_\alpha(\cdot)$ via the
internal split $D_1/D_2$ (Section~\ref{sec:method-U}). The validation split is used
for the tuned variants (Appendix~\ref{sec:locus_tuned}). The test split is used only for
final reporting.

We standardize features and targets using \texttt{StandardScaler} fit on the
training split, then apply the same transformation to calibration/validation/test.
All reported losses are computed on this standardized scale.

\paragraph{Bike feature engineering.}
For \texttt{bike}, we construct time-derived covariates
(\texttt{hour}, \texttt{day}, \texttt{month}, \texttt{year}) from the timestamp and
drop the original \texttt{datetime}, \texttt{casual}, and \texttt{registered}
columns, matching the provided scripts.

\subsection{Deployed predictor $g$}
\label{app:predictor}

We use a Random Forest Regressor as the fixed deployed predictor $g$ on all
datasets, with \texttt{n\_estimators=300} and otherwise default settings. This predictor is trained only on
the training split.

\subsection{Loss and threshold $\tau$}
\label{app:loss-threshold}

We evaluate realized loss using absolute error on the standardized response:
$Z = |g(X)-Y|$.
Each run defines the unacceptable-loss threshold $\tau$ (denoted \texttt{T} in the
code) as the empirical $0.7$-quantile of the test losses under $g$.
This choice yields a comparable tail event across datasets (roughly
$\Prbb(Z>\tau)\approx 0.3$ by construction) and is used solely for benchmarking;
in deployment, $\tau$ would be set by domain requirements.

\subsection{Evaluation metrics}
\label{app:metrics}

Given a fitted bound $U_\alpha(\cdot)$ and an acceptance threshold $\lambda$, define
the acceptance region $A_{\lambda;\alpha}=\{x:U_\alpha(x)\le \lambda\}$. We report:

\begin{itemize}
  \item \textbf{Acceptance rate:}
  $p_A=\Prbb\big(X\in A_{\lambda;\alpha}\big)$.
  \item \textbf{Marginal tail rate:}
  $p_{\mathrm{big}\,Z}=\Prbb(Z>\tau)$, fixed as $30\%$ in all examples.
  \item \textbf{Conditional tail rate among accepted:}
  $p_{\mathrm{big}\,Z\mid A}=\Prbb\big(Z>\tau \mid X\in A_{\lambda;\alpha}\big)$.
  \item \textbf{Marginal coverage check:}
  $p_{\mathrm{conf}}=\Prbb\big(Z\le U_\alpha(X)\big)$ (empirically estimated).
\end{itemize}

\subsection{Repeated runs and uncertainty reporting}
\label{app:runs}

For each dataset, we run 30 repetitions with different random seeds.
Our tables report the average metric across runs, and we summarize variability
across runs in parentheses (via percentile intervals).

\section{Implementation Details for $\widetilde F(\cdot\mid x)$}
\label{app:impl-details}

This appendix documents the concrete choices used to instantiate the predictive
loss CDF $\widetilde F(\cdot\mid x)$ in the released code.

\subsection{Shared \locus{} construction}
\label{app:shared-locus}

Given calibration data $(X_i,Y_i)$, we compute $Z_i=|g(X_i)-Y_i|$ and randomly split
the calibration set into two equal halves, $D_1$ and $D_2$.
A probabilistic loss model is fit on $(X,Z)$ from $D_1$ to define a predictive loss
distribution, and $D_2$ is used to compute PIT values and the calibrated level
$t_{1-\alpha}$, yielding the final bound
$U_\alpha(x)=\widetilde F^{-1}(t_{1-\alpha}\mid x)$.

% \subsection{Baseline (locally weighted scale model)}
% \label{app:baseline}

% The baseline fits a Random Forest Regressor on $D_1$ to predict loss magnitude
% $\widehat m(x)\approx Z\mid X=x$. Calibration on $D_2$ is performed using the
% multiplicative score $u_i=Z_i/\widehat m(X_i)$ and its empirical $(1-\alpha)$
% quantile. The resulting bound is $U_\alpha(x)=t_{1-\alpha}\widehat m(x)$.

\subsection{MC Dropout via MDN}
\label{app:mdn}

We use a Mixture Density Network (MDN) to model $Z\mid X=x$ as a $K$-component
Gaussian mixture with $K=5$. The MDN has one hidden layer of width 64 and dropout
rate 0.4. To obtain an approximate posterior predictive distribution, we perform
MC dropout with 500 stochastic forward passes. For each $x$, we sample from the
resulting mixture to estimate quantiles and PIT values, enabling
$\widetilde F^{-1}(\cdot\mid x)$ and calibration.

\subsection{BART}
\label{app:bart}

We use a heteroskedastic BART model implemented in \texttt{pymc-bart} / \texttt{pymc}.
Concretely, we model (i) a mean function and (ii) a log-scale function using two
BART components with $m=100$ trees each, and a Gaussian likelihood for $Z$ given
these latent functions. Posterior inference uses four chains and 2000 MCMC draws
per chain. Posterior predictive samples of $Z\mid X=x$ are used to compute
quantiles and PIT values.

\subsection{Epistemic-aware $\gamma(x)$ inflation}
\label{app:gamma-impl}

For the BART and MC Dropout models, we modulate
conservativeness via a kNN density proxy computed on $D_1$ using $k=50$ nearest
neighbors in standardized feature space. The resulting kNN radius is mapped to a
$\gamma(x)\in(0,1)$ through a logistic transform; smaller $\gamma(x)$ in sparse
regions yields a more conservative predictive CDF envelope.

Operationally, for each test input $x$ we form a set of candidate CDF values across
posterior draws and take a lower envelope (a small quantile) to obtain an inflated
CDF $\widetilde F(\cdot\mid x)$. Inversion for $\widetilde F^{-1}$ is performed
numerically via a bisection procedure.

\section{Implementation details for VARNet}
\label{app:baseline}
In order to provide a estimative for $V(Y|X)$ we fit two neural networks with two 64 neuron layers and a single neuron output to predict $E(Y^2\mid X=x)$ and $E^2(Y\mid X=x)$. We then take
$$\widehat V(Y\mid X=x)=\widehat E(Y^2\mid X=x)-\widehat E^2(Y\mid X=x),$$

\noindent as a moment estimative of $V(Y|X=x)$. The PyTorch library was used in the implementation. The optimized loss function chosen is the mean absolute error (MAE).

\section{Alpha tuning with $\lambda=\tau$ (\locusAlpha{})}
\label{app:locus-alpha}

This appendix describes a simple heuristic for tuning the calibration level
$\alpha$ when one wishes to keep the operational threshold fixed at
$\lambda=\tau$, i.e., to accept whenever $U_\alpha(x)\le \tau$.
The motivation is to target a desired conditional exceedance level $\eta$ among
accepted points:
\[
\Prbb(Z>\tau \mid X\in A_{\tau;\alpha}) \approx \eta,
\qquad
A_{\tau;\alpha}=\{x:U_\alpha(x)\le \tau\}.
\]

Let $D_{\mathrm{val}}=\{(X_i,Y_i)\}_{i=1}^N$ be an additional labeled validation
set, i.i.d.\ and independent of the calibration data used to construct
$U_\alpha(\cdot)$. Define validation losses $Z_i := L(g(X_i),Y_i)$.
Fix a finite grid of candidate levels $\Gamma\subset(0,1)$.
For each $\alpha\in\Gamma$, compute the corresponding bound $U_\alpha(\cdot)$ and
validation scores
\[
u_i(\alpha) := U_\alpha(X_i),\qquad i=1,\dots,N,
\]
then define the accepted indices and their count
\[
I_\alpha := \{i: u_i(\alpha)\le \tau\},
\qquad
n_\alpha := |I_\alpha|.
\]
Estimate the conditional exceedance rate among accepted validation points by
\[
\widehat q(\alpha)
:=
\frac{1}{n_\alpha \,\vee\, 1}\sum_{i\in I_\alpha}\1\{Z_i>\tau\}.
\]
A basic selection rule is
\[
\widehat\alpha \in \arg\min_{\alpha\in\Gamma}\,|\widehat q(\alpha)-\eta|.
\]
As with $\lambda$-tuning, it is often useful to impose a minimum acceptance
constraint, e.g., restrict to $\{\alpha\in\Gamma: n_\alpha/N\ge \rho_{\min}\}$.

\paragraph{Computation note.}
Once $\widetilde F(\cdot\mid x)$ is fit on $D_1$, the PIT values
$W_i=\widetilde F(Z_i\mid X_i)$ on $D_2$ can be computed once; changing $\alpha$
only changes the empirical quantile level $t_{1-\alpha}$ and therefore the final
bound $U_\alpha(x)=\widetilde F^{-1}(t_{1-\alpha}\mid x)$.
Thus, scanning $\alpha$ mainly amounts to reusing the same PIT values and
recomputing the threshold $t_{1-\alpha}$.

\paragraph{Guarantee.}
For any \emph{fixed} (pre-specified) $\alpha$, \locusT{} with $\lambda=\tau$
retains the distribution-free joint tail guarantee
$\Prbb(Z>\tau,\ X\in A_{\tau;\alpha})\le \alpha$ (Theorem~\ref{thm:loss-control-A}).
The data-driven choice $\widehat\alpha$ above is a practical heuristic and is
reported as such.

\section{Distribution-free alternatives to \locusLambda}
\label{app:distribution-free}
To complement the heuristic tuning of $\lambda$ in \locusLambda{}, we provide
a fully distribution-free way to choose $\lambda$ using the validation sample.
The goal is to guarantee, with high probability over the randomness in the
calibration and validation data, that the conditional exceedance rate among
accepted predictions does not exceed a target $\eta$:
\[
\Prbb\!\bigl(Z>\tau \mid X\in A_{\lambda;\alpha}\bigr)\le \eta,
\]
where $A_{\lambda;\alpha}:=\{x:U_\alpha(x)\le \lambda\}.$
We consider a data-driven choice $\lambda^\star$ obtained by scanning a finite
grid $\Lambda$ and enforcing the constraint via a simultaneous upper confidence
certificate for the ratio
\[
q(\lambda)
=
\Prbb(Z>\tau \mid X\in A_{\lambda;\alpha})
=
\frac{H(\lambda)}{G(\lambda)},
\]
where
\[
G(\lambda):=\Prbb(X\in A_{\lambda;\alpha}),
\qquad
H(\lambda):=\Prbb(Z>\tau,\ X\in A_{\lambda;\alpha}).
\]
Using uniform finite-sample deviation bounds for the empirical numerator and
denominator over $\Lambda$, we construct an empirical upper certificate of the form
\[
\overline q(\lambda)
:=
\frac{\widehat H_N(\lambda)+\varepsilon_H}{\widehat G_N(\lambda)-\varepsilon_G},
\]
and define $\lambda^\star$ as the largest threshold in $\Lambda$ satisfying
$\overline q(\lambda)\le \eta$ (with the additional requirement
$\widehat G_N(\lambda)>\varepsilon_G$ so that the denominator is positive).
If no such threshold exists, we use the convention that
$A_{\lambda^\star;\alpha}=\varnothing$.

The next theorem shows that this choice of $\lambda^\star$ yields a
distribution-free, finite-sample guarantee: with probability at least
$1-\delta$ over $(D,D_{\mathrm{val}})$, the conditional exceedance probability
among accepted points is at most $\eta$.

\begin{theorem}
\label{thm:uniform-lambda-explicit-correct}
Fix $\alpha\in(0,1)$ and let $U_\alpha$ be constructed
from an independent calibration dataset $D$. Let
$D_{\mathrm{val}}=\{(X_i,Y_i)\}_{i=1}^N$ be an additional i.i.d.\ validation
sample, independent of $D$, and write $Z_i=L(g(X_i),Y_i)$ and $u_i=U_\alpha(X_i)$.
Fix $\tau\in\mathbb{R}$ and a finite set of candidate thresholds $\Lambda$.

For $\lambda\in\Lambda$, let
\[
G(\lambda):=\Prbb(X\in A_{\lambda;\alpha})=\Prbb(u\le \lambda),
\qquad
H(\lambda):=\Prbb(Z>\tau,\ X\in A_{\lambda;\alpha})=\Prbb(Z>\tau,\ u\le \lambda),
\]
and, whenever $G(\lambda)>0$,
\[
q(\lambda):=\Prbb(Z>\tau\mid X\in A_{\lambda;\alpha})=\frac{H(\lambda)}{G(\lambda)}.
\]
Also let
\[
\widehat G_N(\lambda):=\frac1N\sum_{i=1}^N \1\{u_i\le \lambda\},
\qquad
\widehat H_N(\lambda):=\frac1N\sum_{i=1}^N \1\{u_i\le \lambda\}\1\{Z_i>\tau\}.
\]

Fix $\delta\in(0,1)$ and $\eta\in(0,1)$, and set
\[
\varepsilon_H
:=
2\sqrt{\frac{\log(2(N+1))}{N}}
+\sqrt{\frac{\log(4/\delta)}{2N}},
\qquad
\varepsilon_G
:=
\sqrt{\frac{\log(4/\delta)}{2N}}.
\]
For $\lambda\in\Lambda$ with $\widehat G_N(\lambda)>\varepsilon_G$, set
\[
\overline q(\lambda)
:=
\frac{\widehat H_N(\lambda)+\varepsilon_H}{\widehat G_N(\lambda)-\varepsilon_G}.
\]
Let
\[
\mathcal F
:=
\Bigl\{\lambda\in\Lambda:\ \widehat G_N(\lambda)>\varepsilon_G
\ \text{and}\ \overline q(\lambda)\le \eta\Bigr\},
\]
and choose
$
\lambda^\star:=\max\mathcal F,
$
with the convention that if $\mathcal F=\varnothing$, then $\lambda^\star$ is chosen
so that $A_{\lambda^\star;\alpha}=\varnothing$.

Then, with probability at least $1-\delta$ over the randomness in $(D,D_{\mathrm{val}})$,
\[
\Prbb\!\bigl(Z>\tau \mid X\in A_{\lambda^\star;\alpha}\bigr)\le \eta,
\]
where the inner probability is with respect to an independent test draw $(X,Y)$
from the deployment distribution, and we adopt the convention that the left-hand
side equals $0$ if $\Prbb(X\in A_{\lambda^\star;\alpha})=0$.
\end{theorem}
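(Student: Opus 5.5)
We condition on the calibration data $D$ throughout. Given $D$, the function $U_\alpha$ is fixed, and the validation pairs $(u_i,\1\{Z_i>\tau\})$ are i.i.d. copies of $(U_\alpha(X),\1\{Z>\tau\})$. So $G$, $H$ and $q$ are deterministic functions of $\lambda$, and $\widehat G_N,\widehat H_N$ are their empirical counterparts. We will prove that the conclusion holds with conditional probability at least $1-\delta$ given $D$. Integrating over $D$ then gives the stated unconditional bound.

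\textbf{Step 1: a good event.} We define an event $\mathcal E$ on which, simultaneously for all $\lambda\in\mathbb R$ (hence for all of $\Lambda$),
\[
H(\lambda)\le \widehat H_N(\lambda)+\varepsilon_H
\qquad\text{and}\qquad
G(\lambda)\ge \widehat G_N(\lambda)-\varepsilon_G .
\]
We then show $\Prbb(\mathcal E\mid D)\ge 1-\delta$.

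For $H$, we use the class of indicator functions $\{(u,z)\mapsto \1\{u\le\lambda\}\1\{z>\tau\}:\lambda\in\mathbb R\}$. On $N$ points this class has at most $N+1$ distinct traces, so its growth function satisfies $\Pi(N)\le N+1$. The standard symmetrization/Rademacher bound (via Massart's lemma) gives expected supremum deviation at most $2\sqrt{2\log(N+1)/N}$, or in the form used here $2\sqrt{\log(2(N+1))/N}$. McDiarmid's inequality then adds $\sqrt{\log(4/\delta)/(2N)}$ with failure probability $\delta/4$. Together these give the one-sided uniform bound on $H$ with failure probability at most $\delta/2$.

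For $G$, the uniform one-sided deviation $\sup_\lambda(\widehat G_N-G)$ is controlled by the one-sided DKW inequality (Massart's constant): $\Prbb(\sup_\lambda(\widehat G_N(\lambda)-G(\lambda))>\varepsilon)\le e^{-2N\varepsilon^2}$. With $\varepsilon=\varepsilon_G$ this failure probability is $\delta/4\le\delta/2$.

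A union bound then yields $\Prbb(\mathcal E^c\mid D)\le\delta$.

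\textbf{Step 2: deterministic conclusion on $\mathcal E$.} Suppose first that $\mathcal F\neq\varnothing$, and let $\lambda^\star\in\mathcal F$. Then $\widehat G_N(\lambda^\star)>\varepsilon_G$. On $\mathcal E$ this gives $G(\lambda^\star)\ge \widehat G_N(\lambda^\star)-\varepsilon_G>0$, so $q(\lambda^\star)$ is well defined. It satisfies
\[
q(\lambda^\star)=\frac{H(\lambda^\star)}{G(\lambda^\star)}\le \frac{\widehat H_N(\lambda^\star)+\varepsilon_H}{\widehat G_N(\lambda^\star)-\varepsilon_G}=\overline q(\lambda^\star)\le\eta .
\]
Here the numerator can only increase and the positive denominator can only decrease. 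If instead $\mathcal F=\varnothing$, the accepted set is empty and the stated convention makes the left-hand side equal to $0\le\eta$.

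Choosing $\lambda^\star$ as the maximum of $\mathcal F$ is harmless: the bound holds on $\mathcal E$ uniformly for every member of $\mathcal F$. This is precisely why we need uniform rather than pointwise deviation bounds, since $\lambda^\star$ is data-dependent.

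\textbf{Main obstacle.} The main difficulty is matching the exact constant in $\varepsilon_H$. This requires care with the Rademacher/VC step: the symmetrization factor $2$, the Massart finite-class lemma with $N+1$ traces, and the one-sided McDiarmid term with $\log(4/\delta)$ rather than $\log(2/\delta)$. If the sharp constant is hard to reach, a fallback is available because $\Lambda$ is finite. One can apply Hoeffding's inequality with a union bound over $\Lambda$, and then note that the stated $\varepsilon_H$ dominates whenever needed. However, the VC route is cleaner and independent of $|\Lambda|$, so we would attempt it first. A smaller point is to state clearly that independence of $D_{\mathrm{val}}$ from $D$ is what makes the conditional i.i.d. argument valid.
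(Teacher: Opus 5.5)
Your proof has the same structure as the paper's: a bounded-differences argument plus a bound on $\E\Delta_H$ from the $N+1$ traces for the numerator, DKW for the denominator, a union bound, and the deterministic ratio step on the good event with the empty-set convention. Conditioning explicitly on $D$ and using one-sided bounds are both fine. There is, however, a genuine gap at exactly the step you flagged: the constant in $\varepsilon_H$.

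Rademacher symmetrization plus Massart's finite-class lemma gives $2\sqrt{2\log(N+1)/N}$. This is \emph{not} another form of $2\sqrt{\log(2(N+1))/N}$. Since $(N+1)^2>2(N+1)$ for $N\ge 2$, your bound is strictly larger for every $N\ge 2$, by a factor tending to $\sqrt2$. So ``or in the form used here'' is unjustified, and your ingredients do not show $\Prbb\bigl(\sup_\lambda(H-\widehat H_N)>\varepsilon_H\mid D\bigr)\le\delta/4$.

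The $\delta/2$ left unused in your union bound does not help. Reallocating it shrinks only the McDiarmid term, by $O(1/\sqrt N)$, and cannot absorb a discrepancy of order $\sqrt{\log N/N}$. Your fallback fails too. Hoeffding plus a union bound over $\Lambda$ needs a deviation of order $\sqrt{\log(|\Lambda|/\delta)/N}$, and $\varepsilon_H$ dominates this only when $|\Lambda|$ is at most polynomial in $N$. The theorem allows an arbitrary finite grid, and $\varepsilon_H$ does not depend on $|\Lambda|$.

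There are two ways to close the gap; with either one, the rest of your argument goes through unchanged.

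\textbf{Ghost-sample symmetrization.} The paper invokes Devroye--Lugosi's Theorem 3.1, $\E\Delta_H\le 2\sqrt{\log(2\mathcal S(N))/N}$. This constant is reached via ghost-sample rather than Rademacher symmetrization.
\begin{itemize}
\item First, $\E\sup_\lambda|\widehat H_N-H|\le\E\sup_\lambda|\widehat H_N-\widehat H_N'|$ by Jensen, with no leading factor $2$.
\item Conditionally on both samples, the sign-randomized differences take at most $2\mathcal S(N)^2$ patterns, each giving a $1/N$-sub-Gaussian Rademacher average.
\item The finite maximal inequality then yields $\sqrt{2\log(2\mathcal S(N)^2)/N}\le 2\sqrt{\log(2\mathcal S(N))/N}$.
\end{itemize}
The $\sqrt2$ is saved in the first step.

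\textbf{Reduction to DKW (simpler).} Set $\tilde u_i:=u_i$ if $Z_i>\tau$ and $\tilde u_i:=+\infty$ otherwise. Then $\widehat H_N(\lambda)=\frac1N\sum_i\1\{\tilde u_i\le\lambda\}$ is the empirical CDF of extended-real variables that are i.i.d.\ given $D$, with CDF $H$ on $\mathbb R$. DKW therefore applies to the numerator directly. It gives $\sup_\lambda|\widehat H_N-H|\le\sqrt{\log(4/\delta)/(2N)}\le\varepsilon_H$ with failure probability at most $\delta/2$. This is the sharpening alluded to in the paper's remark.
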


\section{Proofs and Additional Theorems}
\label{app:proofs}

\begin{proof}[Proof of Theorem \ref{thm:coverage}]
Let $n_2:=|I_2|$ and define, for an independent test pair $(X,Y)$ with
$Z=L(g(X),Y)$,
\[
W_{n_2+1}:=\widetilde F(Z\mid X),
\qquad
W_i:=\widetilde F(Z_i\mid X_i)\ \ (i\in I_2).
\]
Conditional on $D_1$, the variables $W_1,\dots,W_{n_2},W_{n_2+1}$ are i.i.d.
(hence exchangeable). Let $W_{(1)}\le\cdots\le W_{(n_2)}$ be the order
statistics of $\{W_i:i\in I_2\}$ and let
\[
k:=\left\lceil (1-\alpha)(n_2+1)\right\rceil,\qquad
t_{1-\alpha}:=
\begin{cases}
W_{(k)}, & k\le n_2,\\
1, & k=n_2+1,
\end{cases}
\]
which is the usual split-conformal empirical $(1-\alpha)$-quantile. By
exchangeability,
\[
\Prbb\!\bigl(W_{n_2+1}\le t_{1-\alpha}\mid D_1\bigr)\ge \frac{k}{n_2+1}\ge 1-\alpha.
\]
Since for each $x$ the map $z\mapsto \widetilde F(z\mid x)$ is
nondecreasing,
$W_{n_2+1}\le t_{1-\alpha}$ is equivalent to
\[
Z\le \widetilde F^{-1}(t_{1-\alpha}\mid X)=U_\alpha(X).
\]
Taking expectations over $D_1$ yields $\Prbb(Z\le U_\alpha(X))\ge 1-\alpha$.

Now, to prove the second part of the theorem,
let the rank of $W_{n_2+1}$ among $W_1,\dots,W_{n_2},W_{n_2+1}$ be $R \;:=\; 1+\sum_{i\in I_2}\1\{W_i < W_{n_2+1}\}.$  By exchangeability and the no-ties assumption, $R$ is uniform on
$\{1,\dots,n_2+1\}$, so
\[
\Prbb(R\le k\mid D_1)=\frac{k}{n_2+1}.
\]
Moreover, when $k\le n_2$ we have $t_{1-\alpha}=W_{(k)}$ and no ties imply
\[
\{W_{n_2+1}\le t_{1-\alpha}\}=\{W_{n_2+1}\le W_{(k)}\}=\{R\le k\},
\]
while if $k=n_2+1$ then $t_{1-\alpha}=1$ and the same identity holds trivially.
Hence, in all cases,
\[
\Prbb(W_{n_2+1}\le t_{1-\alpha}\mid D_1)=\frac{k}{n_2+1}.
\]
Using again monotonicity of $z\mapsto \widetilde F(z\mid x)$, this yields
\[
\Prbb\!\bigl(Z\le U_\alpha(X)\mid D_1\bigr)=\frac{k}{n_2+1}.
\]
Finally, since $k=\lceil(1-\alpha)(n_2+1)\rceil$, we have
$k<(1-\alpha)(n_2+1)+1$, and therefore
\[
\Prbb\!\bigl(Z\le U_\alpha(X)\mid D_1\bigr)
=\frac{k}{n_2+1}
< 1-\alpha+\frac{1}{n_2+1}.
\]
Taking expectations over $D_1$ gives
$\Prbb(Z\le U_\alpha(X))<1-\alpha+\frac{1}{n_2+1}$, proving the claimed upper
bound.
\end{proof}

\begin{proof}[Proof of Theorem \ref{thm:loss-control-A}]
On $\{X\in A_{\tau;\alpha}\}$ we have $U_\alpha(X)\le \tau$, hence
\[
\{Z>\tau,\;X\in A_{\tau;\alpha}\}\subseteq \{Z>U_\alpha(X)\}.
\]
By Theorem~\ref{thm:coverage}, $\Prbb(Z>U_\alpha(X))\le \alpha$, proving
$\Prbb(Z>\tau,\;X\in A_{\tau;\alpha})\le \alpha$. 
\end{proof}

\begin{assumption}[Conditions for asymptotic conditional coverage]\label{ass:acc}
Consider a sequence indexed by the total calibration size $n=|D|$, with split
sizes $n_1:=|D_1|\to\infty$ and $n_2:=|D_2|\to\infty$ as $n\to\infty$.
Let $\widetilde F_{n_1}(\cdot\mid x)$ denote the predictive CDF constructed from
$D_1$ (this is $\widetilde F(\cdot\mid x)$ in the main text, with
dependence on $n_1$ made explicit). Assume:

\begin{enumerate}[label=(A\arabic*), leftmargin=*]
\item \textbf{Random-design uniform consistency (in probability).}
For an independent deployment draw $(X,Z)$ (independent of $D_1$), define
\[
\Delta_{n_1}
:=
\sup_{z\in\mathbb R}
\bigl|\widetilde F_{n_1}(z\mid X)-F_{Z\mid X}(z\mid X)\bigr|.
\]
Then,
\[
\Delta_{n_1}\xrightarrow{\Prbb}0
\qquad (n_1\to\infty),
\]
where $\xrightarrow{\Prbb}$ denotes convergence in probability with respect to
the randomness in $D_1$ and the independent draw $X$.

\item \textbf{Pointwise uniform consistency at the target $x$ (in probability).}
For the fixed $x\in\mathcal X$ under consideration,
\[
\sup_{z\in\mathbb R}
\bigl|\widetilde F_{n_1}(z\mid x)-F_{Z\mid X}(z\mid x)\bigr|
\xrightarrow{\Prbb}0
\qquad (n_1\to\infty),
\]
where the probability is over the randomness in $D_1$.

\item \textbf{Regularity of the true conditional law and at the target $x$.}
(i) For $P_X$-almost every $x'$, the true conditional CDF $F_{Z\mid X}(\cdot\mid x')$
is continuous.
(ii) At the fixed $x\in\mathcal X$ under consideration, $F_{Z\mid X}(\cdot\mid x)$ is
continuous and its $(1-\alpha)$-quantile $
q_{1-\alpha}(x):= F_{Z\mid X}^{-1}(1-\alpha\mid x)
$ is unique.

\end{enumerate}
\end{assumption}

\begin{proof}[Proof of Theorem \ref{thm:asymp-coverage}]
Fix $x\in\mathcal X$ as in the theorem. We prove the result in four steps.

\smallskip
\noindent\textbf{Step 1: PIT values are asymptotically uniform, and their conditional CDF converges to the identity.}
Let $(X,Z)$ be an independent deployment draw, independent of $D_1$. Define the
oracle PIT value
\[
V:=F_{Z\mid X}(Z\mid X).
\]
Under Assumption~\ref{ass:acc}(A3)(i), for $P_X$-almost every $x'$ the conditional CDF
$F_{Z\mid X}(\cdot\mid x')$ is continuous, hence the probability integral transform yields
\[
V\sim \mathrm{Unif}(0,1).
\]
Define the estimated PIT value
\[
W:=\widetilde F_{n_1}(Z\mid X).
\]
By Assumption~\ref{ass:acc}(A1),
\[
|W-V|
=\bigl|\widetilde F_{n_1}(Z\mid X)-F_{Z\mid X}(Z\mid X)\bigr|
\le
\sup_{z\in\mathbb R}\bigl|\widetilde F_{n_1}(z\mid X)-F_{Z\mid X}(z\mid X)\bigr|
=\Delta_{n_1}
\xrightarrow{\Prbb}0.
\]
Hence $W-V\xrightarrow{\Prbb}0$, and since $V\sim\mathrm{Unif}(0,1)$, Slutsky's theorem implies
\[
W \xrightarrow{Dist} \mathrm{Unif}(0,1).
\]

We also need a uniform control of the \emph{conditional} CDF of $W$ given $D_1$.
Let
\[
H_{n_1}(w):=\Prbb(W\le w\mid D_1), \qquad w\in[0,1].
\]
Fix $\delta\in(0,1)$. Using the inclusions
\[
\{V\le w-\delta\}\subseteq \{W\le w\}\cup\{|W-V|>\delta\},
\qquad
\{W\le w\}\subseteq \{V\le w+\delta\}\cup\{|W-V|>\delta\},
\]
we obtain, for all $w\in[0,1]$,
\[
w-\delta-\Prbb(|W-V|>\delta\mid D_1)
\le H_{n_1}(w)
\le w+\delta+\Prbb(|W-V|>\delta\mid D_1).
\]
Therefore,
\begin{equation}\label{eq:step1-sup-bound}
\sup_{w\in[0,1]}|H_{n_1}(w)-w|
\le \delta+\Prbb(|W-V|>\delta\mid D_1).
\end{equation}

It remains to show that $\Prbb(|W-V|>\delta\mid D_1)\xrightarrow{\Prbb}0$.
Since $|W-V|\le \Delta_{n_1}$, we have
\[
\Prbb(|W-V|>\delta\mid D_1)\le \Prbb(\Delta_{n_1}>\delta\mid D_1).
\]
Moreover, Assumption~\ref{ass:acc}(A1) implies $\Prbb(\Delta_{n_1}>\delta)\to 0$.
Then for any $\varepsilon>0$, Markov's inequality yields
\[
\Prbb\!\left(\Prbb(\Delta_{n_1}>\delta\mid D_1)>\varepsilon\right)
\le
\frac{\E\big[\Prbb(\Delta_{n_1}>\delta\mid D_1)\big]}{\varepsilon}
=
\frac{\Prbb(\Delta_{n_1}>\delta)}{\varepsilon}
\to 0,
\]
so $\Prbb(\Delta_{n_1}>\delta\mid D_1)\xrightarrow{\Prbb}0$, and hence also
$\Prbb(|W-V|>\delta\mid D_1)\xrightarrow{\Prbb}0$.

Plugging this into \eqref{eq:step1-sup-bound} and then letting $\delta\downarrow 0$ gives
\begin{equation}\label{eq:Hn1-to-id}
\sup_{w\in[0,1]}|H_{n_1}(w)-w|\xrightarrow{\Prbb}0 .
\end{equation}

\smallskip
\noindent\textbf{Step 2: The empirical PIT quantile converges to $1-\alpha$.}
Conditional on $D_1$, the points in $D_2$ are i.i.d.\ and independent of $D_1$,
so $\{W_{i,n}\}_{i\in I_2}$ are i.i.d.\ with CDF $H_{n_1}$.
By the Glivenko--Cantelli theorem (applied conditionally on $D_1$),
\[
\sup_{w\in[0,1]}|\widehat H_{n_2}(w)-H_{n_1}(w)|
\xrightarrow{\text{a.s.}} 0
\qquad(n_2\to\infty),
\]
almost surely in the randomness of $D_2$ conditional on $D_1$.
Combining with \eqref{eq:Hn1-to-id} and the triangle inequality gives
\[
\sup_{w\in[0,1]}|\widehat H_{n_2}(w)-w|
\xrightarrow{\Prbb}0.
\]

We now use continuity of the quantile functional at the uniform CDF. Since the
limit CDF $w\mapsto w$ is continuous and strictly increasing, uniform convergence
of CDFs implies convergence of generalized inverses, hence
\[
t_{1-\alpha,n}=\widehat H_{n_2}^{-1}(1-\alpha)\xrightarrow{\Prbb}1-\alpha.
\]

\smallskip
\noindent\textbf{Step 3: $U_{\alpha,n}(x)$ converges to the oracle quantile.}
Let $q_{1-\alpha}(x):=F_{Z\mid X}^{-1}(1-\alpha\mid x)$, unique by
Assumption~\ref{ass:acc}(A3).
By Assumption~\ref{ass:acc}(A2),
\[
\sup_{z\in\mathbb R}\bigl|\widetilde F_{n_1}(z\mid x)-F_{Z\mid X}(z\mid x)\bigr|
\xrightarrow{\Prbb}0,
\]
and Step~2 gives $t_{1-\alpha,n}\xrightarrow{\Prbb}1-\alpha$.
By the same quantile-map continuity argument used in Step~2 (here using the
strict increase/continuity at the target quantile from Assumption~\ref{ass:acc}(A3)),
it follows that
\[
U_{\alpha,n}(x)
=\widetilde F_{n_1}^{-1}(t_{1-\alpha,n}\mid x)
\xrightarrow{\Prbb}
F_{Z\mid X}^{-1}(1-\alpha\mid x)
=q_{1-\alpha}(x).
\]

\smallskip
\noindent\textbf{Step 4: Conditional coverage convergence.}
Because $D$ is independent of the test point, conditioning on $D$ and $X=x$ gives
\[
\Prbb\!\bigl(Z\le U_{\alpha,n}(x)\mid X=x,\; D\bigr)
=
F_{Z\mid X}\bigl(U_{\alpha,n}(x)\mid x\bigr).
\]
By Step~3, $U_{\alpha,n}(x)\xrightarrow{\Prbb}q_{1-\alpha}(x)$.
Since $u\mapsto F_{Z\mid X}(u\mid x)$ is continuous at $q_{1-\alpha}(x)$
(Assumption~\ref{ass:acc}(A3)), the continuous mapping theorem yields
\[
F_{Z\mid X}\bigl(U_{\alpha,n}(x)\mid x\bigr)
\xrightarrow{\Prbb}
F_{Z\mid X}\bigl(q_{1-\alpha}(x)\mid x\bigr)
=1-\alpha.
\]
This proves the first displayed convergence in the theorem.

\end{proof}

Finally, we prove Theorem~\ref{thm:uniform-lambda-explicit-correct}.

\begin{proof}[Proof of Theorem~\ref{thm:uniform-lambda-explicit-correct}]

Let
\[
G(\lambda):=\Prbb(X\in A_{\lambda;\alpha})=\Prbb(u\le \lambda),\qquad
H(\lambda):=\Prbb(Z>\tau,\ X\in A_{\lambda;\alpha})=\Prbb(Z>\tau,\ u\le \lambda),
\]
so that for $G(\lambda)>0$,
\[
q(\lambda):=\Prbb(Z>\tau\mid X\in A_{\lambda;\alpha})=\frac{H(\lambda)}{G(\lambda)}.
\]

\paragraph{Step 1: VC-type uniform deviation for the numerator.}
Define
\[
\Delta_H:=\sup_{\lambda\in\Lambda}\bigl|\widehat H_N(\lambda)-H(\lambda)\bigr|.
\]
Since changing one observation $(X_i,Z_i)$ changes each summand $A_i^\lambda B_i\in\{0,1\}$ by at most $1$,
it changes $\widehat H_N(\lambda)$ by at most $1/N$ for every $\lambda$, hence changes $\Delta_H$ by at most $1/N$.
By McDiarmid's inequality \cite[Theorem~2.2]{devroye2001combinatorial}, for all $\varepsilon>0$,
\[
\Prbb\Big(\big|\Delta_H-\E\Delta_H\big|>\varepsilon\Big)\le 2\exp(-2N\varepsilon^2).
\]
To bound $\E\Delta_H$, consider the class of sets
\[
\mathcal{C}:=\Big\{(x,z): U_\alpha(x)\le \lambda,\ z>\tau\ \Big|\ \lambda\in\Lambda\Big\}.
\]
Then $\Delta_H=\sup_{C\in\mathcal{C}}|P_N(C)-P(C)|$, and by the maximal inequality
\cite[Theorem~3.1]{devroye2001combinatorial},
\[
\E\Delta_H \le 2\sqrt{\frac{\log\big(2\mathcal{S}(N)\big)}{N}},
\]
where $\mathcal{S}(N)$ is the shatter coefficient of $\mathcal{C}$.
Fix any sample $\{(x_i,z_i)\}_{i=1}^N$ and write $u_i:=U_\alpha(x_i)$. Since $\tau$ is fixed,
\[
\1\{(x_i,z_i)\in\mathcal{C}_\lambda\}=\1\{u_i\le \lambda\}\1\{z_i>\tau\}.
\]
Points with $z_i\le\tau$ are always labeled $0$, and among points with $z_i>\tau$ the labeling is determined by a
single threshold $\lambda$ on $\{u_i\}$. Hence, as $\lambda$ varies, the labeling can change only when $\lambda$
crosses one of the at most $N$ values $\{u_i\}_{i=1}^N$, so at most $N+1$ distinct labelings are possible.
Therefore $\mathcal{S}(N)\le N+1$, and
\[
\E\Delta_H \le 2\sqrt{\frac{\log\big(2(N+1)\big)}{N}}.
\]
Combining with McDiarmid and taking $\delta_H\in(0,1)$ yields that with probability at least $1-\delta_H$,
\begin{equation}\label{eq:EHbound}
\sup_{\lambda\in\Lambda}\bigl|\widehat H_N(\lambda)-H(\lambda)\bigr|
\le
\varepsilon_H
:=
2\sqrt{\frac{\log\big(2(N+1)\big)}{N}}
+\sqrt{\frac{\log(2/\delta_H)}{2N}}.
\end{equation}

\begin{remark}[Sharper uniform deviations]
Using more refined empirical process techniques (e.g., chaining/entropy bounds),
one can obtain a DKW-type uniform deviation bound of order
\[
\sup_{\lambda\in\Lambda}\bigl|\widehat H_N(\lambda)-H(\lambda)\bigr|
\;\lesssim\;
\sqrt{\frac{\log(1/\delta_H)}{N}}
\]
(up to universal constants), removing the extra $\sqrt{\log N}$ factor; see, e.g.,
\cite[Section~3.2]{devroye2001combinatorial}. We keep \eqref{eq:EHbound} for simplicity.
\end{remark}

\paragraph{Step 2: DKW uniform deviation for the denominator.}
Define $\Delta_G:=\sup_{\lambda\in\Lambda}|\widehat G_N(\lambda)-G(\lambda)|$.
Since $\widehat G_N(\lambda)$ is the empirical CDF of the scalar variables $u_i=U_\alpha(X_i)$,
the Dvoretzky--Kiefer--Wolfowitz \citep[Theorem 3.3]{devroye2001combinatorial} inequality implies that for any $\delta_G\in(0,1)$, with probability at least $1-\delta_G$,
\begin{equation}\label{eq:DKWbound}
\sup_{\lambda\in\Lambda}\bigl|\widehat G_N(\lambda)-G(\lambda)\bigr|
\le
\varepsilon_G
:=
\sqrt{\frac{\log(2/\delta_G)}{2N}}.
\end{equation}
% In particular, on this event, $G(\lambda)\ge \widehat G_N(\lambda)-\varepsilon_G$ for all $\lambda$.

\paragraph{Step 3: Uniform ratio bound and choice of $\lambda^\star$.}
Let $\mathcal E_H$ denote the event in \eqref{eq:EHbound} and $\mathcal E_G$ the event in \eqref{eq:DKWbound}, i.e.,
\[
\mathcal E_H:=\Big\{\sup_{\lambda\in\Lambda}\bigl|\widehat H_N(\lambda)-H(\lambda)\bigr|\le \varepsilon_H\Big\},
\qquad
\mathcal E_G:=\Big\{\sup_{\lambda\in\Lambda}\bigl|\widehat G_N(\lambda)-G(\lambda)\bigr|\le \varepsilon_G\Big\}.
\]
Define their intersection
\[
\mathcal E:=\mathcal E_H\cap \mathcal E_G
=
\Big\{\sup_{\lambda\in\Lambda}\bigl|\widehat H_N(\lambda)-H(\lambda)\bigr|\le \varepsilon_H,\ 
\sup_{\lambda\in\Lambda}\bigl|\widehat G_N(\lambda)-G(\lambda)\bigr|\le \varepsilon_G\Big\}.
\]
By \eqref{eq:EHbound} and \eqref{eq:DKWbound} and a union bound (taking $\delta_H=\delta_G=\delta/2$),
\[
\Prbb(\mathcal E)\ \ge\ 1-\delta.
\]

For $\lambda\in\Lambda$ with $\widehat G_N(\lambda)>\varepsilon_G$, define the empirical ratio
\[
\overline q(\lambda)
:=
\frac{\widehat H_N(\lambda)+\varepsilon_H}{\widehat G_N(\lambda)-\varepsilon_G}.
\]
We now choose $\lambda$ by enforcing $\overline q(\lambda)\le \eta$.
Define the feasible set of thresholds
\[
\mathcal F
:=
\Big\{\lambda\in\Lambda:\ \widehat G_N(\lambda)>\varepsilon_G\ \text{ and }\ \overline q(\lambda)\le \eta\Big\},
\]
and let $\lambda^\star:=\max\mathcal F$ (with the convention that if $\mathcal F=\varnothing$ then
$A_{\lambda^\star;\alpha}=\varnothing$). By construction, whenever $\mathcal F\neq\varnothing$ the selected
$\lambda^\star$ satisfies $\widehat G_N(\lambda^\star)>\varepsilon_G$.

On the event $\mathcal E=\mathcal E_H\cap \mathcal E_G$, we have for all $\lambda\in\Lambda$ that
$H(\lambda)\le \widehat H_N(\lambda)+\varepsilon_H$ and $G(\lambda)\ge \widehat G_N(\lambda)-\varepsilon_G$.
Applying this at $\lambda=\lambda^\star$ and using $\widehat G_N(\lambda^\star)>\varepsilon_G$ yields
\[
G(\lambda^\star)\ \ge\ \widehat G_N(\lambda^\star)-\varepsilon_G\ >\ 0,
\]
so $q(\lambda^\star)=H(\lambda^\star)/G(\lambda^\star)$ is well-defined, and moreover
\[
q(\lambda^\star)
=\frac{H(\lambda^\star)}{G(\lambda^\star)}
\le
\frac{\widehat H_N(\lambda^\star)+\varepsilon_H}{\widehat G_N(\lambda^\star)-\varepsilon_G}
=
\overline q(\lambda^\star)
\le \eta,
\]
where the last inequality holds by the definition of $\lambda^\star\in\mathcal F$.

Finally, since $\mathcal E\subseteq\{q(\lambda^\star)\le \eta\}$, we conclude that
\[
\Prbb\!\Bigl(q(\lambda^\star)\le \eta\Bigr)\ \ge\ 1-\delta.
\]

\end{proof}

\begin{theorem}
\label{thm:selection_tail_control}
Let $(X,Z)$ be random variables and let $U_\alpha:\mathcal X\to\mathbb R$ be measurable. 
Fix $\tau\in\mathbb R$ and define the selection event
$
A \;=\;\{\,U_\alpha(X) < \tau\,\},$
and assume $\mathbb P(A)>0$. If $U_\alpha$ is conditionally valid at level $\alpha$, i.e.
\begin{equation}
\label{eq:cond_valid}
\mathbb P\!\bigl(Z > U_\alpha(X)\mid X\bigr)\le \alpha
\quad \text{a.s.},
\end{equation}
then $\mathbb P(Z>\tau \mid A)\le \alpha.$ 
\end{theorem}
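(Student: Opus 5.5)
The plan is a pointwise inequality on the selection event, followed by an application of the tower property. The key observation is that the event $A=\{U_\alpha(X)<\tau\}$ is $\sigma(X)$-measurable, since $U_\alpha$ is a fixed measurable function. Here $U_\alpha$ is treated as deterministic, as in the statement; if it was built from data, we would condition on that data throughout. On $A$ we have $U_\alpha(X)<\tau$. Hence, on $A$,
\[
\{Z>\tau\}\subseteq\{Z>U_\alpha(X)\}.
\]
This is the same inclusion used in the proof of Theorem~\ref{thm:loss-control-A}, but now we apply it conditionally on $X$ rather than marginally.

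Concretely, I would first write
\[
\mathbb P(Z>\tau,\ A)=\E\bigl[\1_A\,\mathbb P(Z>\tau\mid X)\bigr].
\]
This uses $\1_A=\1\{U_\alpha(X)<\tau\}$ being a function of $X$, so it can be pulled out of the conditional expectation. Next, on $A$ the inclusion above gives
\[
\mathbb P(Z>\tau\mid X)\le \mathbb P\bigl(Z>U_\alpha(X)\mid X\bigr)\quad\text{a.s.}
\]
By the conditional validity assumption \eqref{eq:cond_valid}, the right-hand side is at most $\alpha$ almost surely. Therefore
\[
\mathbb P(Z>\tau,\ A)\le \alpha\,\E[\1_A]=\alpha\,\mathbb P(A).
\]
Dividing by $\mathbb P(A)>0$ yields $\mathbb P(Z>\tau\mid A)\le\alpha$.

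The only step needing care is the conditional comparison. It requires making rigorous that, given $X$, the threshold $U_\alpha(X)$ is a constant, so the event inclusion transfers to an inequality between regular conditional probabilities. I would handle this by working with conditional expectations of indicators rather than conditional probabilities. On $A$ we have the pointwise inequality $\1\{Z>\tau\}\le\1\{Z>U_\alpha(X)\}$. Multiplying by $\1_A$ and taking $\E[\,\cdot\mid X]$ uses only monotonicity and the "taking out what is known" property, with no appeal to regular conditional distributions. The rest is routine, and the strict inequality in the definition of $A$ could equally be non-strict without changing the argument.
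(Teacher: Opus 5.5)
Your proposal is correct and matches the paper's proof: the paper also uses the inclusion $\{Z>\tau\}\subseteq\{Z>U_\alpha(X)\}$ on $A$ to obtain $\mathbb P(Z>\tau\mid X)\mathbf 1_A\le\alpha\mathbf 1_A$, and then averages over $A$. Writing this as $\mathbb P(Z>\tau,\,A)\le\alpha\,\mathbb P(A)$ via conditional expectations of indicators, and then dividing by $\mathbb P(A)$, is just a slightly more explicit form of the same tower-property step.
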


\begin{proof}
On the event $A$ we have $U_\alpha(X)<\tau$, hence $\{Z>\tau\}\subseteq \{Z>U_\alpha(X)\}$.
Therefore,
\[
\mathbb P(Z>\tau\mid X)\mathbf 1_A
\;\le\;
\mathbb P(Z>U_\alpha(X)\mid X)\mathbf 1_A
\;\le\; \alpha \mathbf 1_A,
\]
where the last inequality uses \eqref{eq:cond_valid}. Taking conditional expectation given $A$ yields
\[
\mathbb P(Z>\tau\mid A)=\mathbb E\big[\mathbb P(Z>\tau\mid X)\mid A\big]\le \alpha.
\]
\end{proof}

\newpage

\section{Other tables}
Here we present the simulation results for the remaining variants of \locus{}.

\begin{table}[h]
\caption{\textbf{Marginal coverage check for \locus{}.} Reports the coverage probability, $\Prbb (Z < U_\alpha (x))$, which is expected to be approximately $1-\alpha$. }
\begin{tabular}{lllll}
\toprule
\textbf{Dataset} & \textbf{BART} (\%) & \textbf{BART ($\gamma$)} (\%) & \textbf{MC Dropout} (\%) & \textbf{MC Dropout($\gamma$)} (\%) \\
\midrule
airfoil & 89.4 (80.4; 93.1) & 90.7 (84.1; 95.4) & 88.7 (81.1; 93.4) & {91.1 (80.9; 94.8)} \\
bike & 90.6 (89.6; 91.4) & 91.3 (90.0; 91.9) & 91.1 (90.5; 92.2) & {92.8 (92.0; 93.6)} \\
concrete & 90.8 (84.5; 94.7) & 91.3 (84.4; 96.1) & 89.3 (84.9; 95.2) & {91.7 (83.0; 97.1)} \\
cycle & 90.3 (88.5; 91.6) & 90.9 (89.4; 92.1) & 90.0 (88.5; 91.8) & {91.1 (89.3; 92.3)} \\
electric & 90.3 (88.8; 92.0) & 90.5 (89.2; 92.4) & 90.0 (88.1; 91.3) & {91.3 (89.8; 92.7)} \\
homes & 90.5 (89.0; 91.3) & 90.9 (90.0; 91.9) & 90.3 (89.2; 91.1) & {92.4 (91.1; 93.5)} \\
meps19 & 89.7 (88.4; 91.1) & {92.0 (90.3; 93.0)} & 89.9 (88.2; 91.3) & 90.9 (89.8; 92.8) \\
protein & 89.9 (89.1; 90.6) & 90.6 (89.9; 91.6) & 89.7 (89.0; 90.7) & {91.6 (90.8; 92.2)} \\
star & 90.1 (85.9; 92.4) & {91.0 (86.4; 95.0)} & 90.8 (86.1; 93.1) & 90.3 (87.1; 94.1) \\
superconductivity & 89.8 (88.3; 90.8) & 90.6 (88.8; 91.5) & 89.9 (87.8; 90.9) & {91.6 (90.1; 92.6)} \\
wec & 89.9 (89.3; 90.5) & 90.8 (90.0; 91.2) & 89.8 (89.4; 90.3) & {92.3 (91.6; 93.1)} \\
winered & 90.3 (86.5; 95.1) & {92.2 (86.5; 96.0)} & 90.0 (84.9; 95.6) & 91.2 (87.2; 96.0) \\
winewhite & 90.2 (88.2; 92.8) & 91.1 (89.5; 93.5) & 90.3 (87.5; 92.5) & {91.4 (89.2; 93.9)} \\
\bottomrule
\end{tabular}
\label{tab:marginal_coverage_locus}
\end{table}

\begin{table}[h]
\caption{\textbf{Acceptance rate for (\locus{}).}
Acceptance probability, $\Prbb(X \in A_\lambda)$ (\%), over the test split.
\locus{} uses the score $U_\alpha$ with fixed calibration level $\alpha=10\%$.
Entries are the median over 30 runs, with the 5th and 95th percentiles in parentheses.}
\begin{tabular}{llllll}
\toprule
\textbf{Dataset} & \textbf{BART} (\%) & \textbf{BART ($\gamma$)} (\%) & \textbf{MC Dropout} (\%) & \textbf{MC Dropout($\gamma$)} (\%) \\
\midrule
airfoil & 6.3 (0.0; 26.2) & \textbf{9.9 (0.7; 23.8)} & 2.6 (0.0; 13.1) & 1.0 (0.0; 23.2) \\
bike & \textbf{28.0 (25.3; 29.0)} & 26.5 (25.2; 28.2) & 15.5 (12.5; 19.2) & 13.8 (11.8; 16.8) \\
concrete & \textbf{0.5 (0.0; 5.8)} & 0.0 (0.0; 1.9) & 0.0 (0.0; 0.0) & 0.0 (0.0; 1.0) \\
cycle & \textbf{0.0 (0.0; 0.5)} & \textbf{0.0 (0.0; 0.1)} & \textbf{0.0 (0.0; 0.4)} & \textbf{0.0 (0.0; 0.0)} \\
electric & 1.8 (0.3; 3.2) & \textbf{2.8 (1.0; 4.4)} & 0.7 (0.1; 2.5) & 0.3 (0.0; 2.7) \\
homes & 15.8 (12.1; 18.6) & \textbf{16.1 (12.7; 18.2)} & 14.0 (9.8; 18.0) & 11.7 (9.3; 14.6) \\
meps19 & 4.7 (0.3; 8.4) & 3.9 (0.3; 19.0) & \textbf{33.1 (25.0; 39.1)} & 31.7 (27.2; 37.5) \\
protein & 0.0 (0.0; 0.0) & 0.0 (0.0; 0.0) & \textbf{11.2 (9.4; 12.7)} & 9.1 (7.4; 11.5) \\
star & \textbf{0.0 (0.0; 1.4)} & \textbf{0.0 (0.0; 0.5)} & \textbf{0.0 (0.0; 3.1)} & \textbf{0.0 (0.0; 0.5)} \\
superconductivity & 28.3 (25.0; 32.7) & 26.5 (22.6; 30.3) & \textbf{34.6 (31.3; 37.4)} & 29.2 (26.0; 33.6) \\
wec & 5.3 (3.0; 9.8) & 6.7 (2.9; 9.3) & \textbf{52.9 (50.6; 54.6)} & 49.8 (47.1; 51.6) \\
winered & \textbf{0.0 (0.0; 5.7)} & \textbf{0.0 (0.0; 4.2)} & \textbf{0.0 (0.0; 3.9)} & \textbf{0.0 (0.0; 2.6)} \\
winewhite & \textbf{0.0 (0.0; 0.3)} & \textbf{0.0 (0.0; 1.1)} & \textbf{0.0 (0.0; 0.4)} & \textbf{0.0 (0.0; 0.2)} \\

\bottomrule
\end{tabular}
\label{tab:acceptance_locus_tau}
\end{table}

\begin{table}[h]
\caption{\textbf{Conditional large-loss rate.}
Conditional exceedance among accepted points, $\Prbb(Z>\tau \mid X\in A_\lambda)$ (\%) over the test split.
\locus{} uses the score $U_\alpha$ with fixed calibration level $\alpha=10\%$.
Entries are the median over 30 runs, with the 5th and 95th percentiles in parentheses; lower is better.
Bold marks the lowest value in each row.}\begin{tabular}{llllll}
\toprule
\textbf{Dataset} & \textbf{BART} (\%) & \textbf{BART ($\gamma$)} (\%) & \textbf{MC Dropout} (\%) & \textbf{MC Dropout($\gamma$)} (\%) \\
\midrule
airfoil & \textbf{0.0 (0.0; 11.5)} & \textbf{0.0 (0.0; 18.7)} & 7.1 (0.0; 30.7) & 6.9 (0.0; 15.2) \\
bike & 1.6 (0.7; 2.4) & \textbf{1.0 (0.4; 1.7)} & 2.2 (0.7; 4.4) & 1.9 (0.7; 4.1) \\
concrete & \textbf{0.0 (0.0; 65.0)} & \textbf{0.0 (0.0; 50.0)} & 100.0 (100.0; 100.0) & \textbf{0.0 (0.0; 90.0)} \\
cycle & 33.3 (2.5; 86.7) & 100.0 (10.0; 100.0) & \textbf{0.0 (0.0; 0.0)} & -- (--; --) \\
electric & 4.2 (0.0; 20.5) & 4.8 (0.0; 15.9) & 16.7 (0.0; 37.3) & \textbf{0.0 (0.0; 31.2)} \\
homes & 5.0 (3.5; 7.0) & 5.6 (3.3; 6.6) & 4.9 (2.9; 6.9) & \textbf{4.0 (2.0; 5.5)} \\
meps19 & 2.6 (0.0; 6.7) & \textbf{2.4 (0.0; 7.7)} & 5.3 (4.0; 6.8) & 4.5 (2.9; 6.0) \\
protein & \textbf{0.0 (0.0; 0.0)} & -- (--; --) & 3.9 (2.3; 5.2) & 2.7 (1.2; 4.2) \\
star & \textbf{0.0 (0.0; 80.0)} & \textbf{0.0 (0.0; 0.0)} & \textbf{0.0 (0.0; 60.0)} & \textbf{0.0 (0.0; 100.0)} \\
superconductivity & 2.5 (1.4; 3.2) & \textbf{2.3 (1.2; 3.2)} & 3.0 (2.0; 4.1) & \textbf{2.3 (1.0; 3.0)} \\
wec & 4.3 (1.7; 8.1) & 2.9 (1.6; 7.0) & 2.3 (2.0; 3.1) & \textbf{2.1 (1.7; 2.7)} \\
winered & 10.0 (0.0; 80.0) & \textbf{0.0 (0.0; 77.8)} & \textbf{0.0 (0.0; 69.4)} & \textbf{0.0 (0.0; 0.0)} \\
winewhite & \textbf{0.0 (0.0; 28.3)} & \textbf{0.0 (0.0; 14.6)} & \textbf{0.0 (0.0; 68.1)} & \textbf{0.0 (0.0; 90.0)} \\
\bottomrule
\end{tabular}
\label{tab:condtail_locus_tau}
\end{table}

\begin{table*}[t]
\centering
\caption{\textbf{Tuned acceptance rate at matched target coverage (\locusLambda{}).}
Acceptance probability, $\Prbb(X \in A_\lambda)$ (\%).
For each dataset and method, the threshold $\lambda$ is selected on the validation split
to target an acceptance rate of $\approx 70\%$; test performance is then reported.
\locusLambda{} methods use the \locus{} score $U_\alpha$ with fixed calibration level $\alpha=10\%$.
Entries are the median over 30 runs, with the 5th and 95th percentiles in parentheses.}
\setlength{\tabcolsep}{4pt}
\renewcommand{\arraystretch}{1.05}
\scriptsize
\begin{tabular}{lcc cccc}
\toprule
& \multicolumn{2}{c}{\textbf{Baselines (no \locus{})}} 
& \multicolumn{4}{c}{\textbf{\locusLambda{} (loss-quantile score)}} \\
\cmidrule(lr){2-3}\cmidrule(lr){4-7}
\textbf{Dataset}
& \textbf{IFlag}
& \textbf{VARNet}
& \textbf{\locus{}-BART}
& \textbf{\locus{}-BART ($\gamma$)}
& \textbf{\locus{}-MC}
& \textbf{\locus{}-MC ($\gamma$)} \\
\midrule
airfoil & 73.2 (63.8; 80.8) & \textbf{82.8 (69.1; 95.9)} & 72.8 (65.1; 78.8) & 71.2 (61.5; 78.9) & 70.5 (61.2; 77.2) & 71.5 (62.5; 77.2) \\
bike & 73.0 (73.0; 73.0) & \textbf{88.2 (74.8; 94.3)} & 72.2 (70.9; 74.6) & 71.9 (70.5; 74.3) & 73.3 (71.9; 75.1) & 72.5 (71.0; 73.7) \\
concrete & 72.3 (66.0; 80.1) & \textbf{80.6 (73.8; 89.9)} & 72.3 (61.6; 83.2) & 72.3 (61.0; 85.9) & 72.8 (64.5; 86.1) & 74.3 (64.1; 80.7) \\
cycle & 72.4 (67.9; 76.2) & \textbf{92.2 (74.8; 98.9)} & 73.8 (69.2; 78.6) & 73.1 (68.4; 78.9) & 75.0 (70.8; 81.9) & 74.0 (68.8; 80.5) \\
electric & 71.5 (67.5; 75.9) & \textbf{79.0 (71.5; 83.9)} & 71.7 (68.6; 74.9) & 71.5 (69.1; 75.4) & 71.0 (68.2; 74.7) & 71.4 (69.1; 75.1) \\
homes & 71.9 (69.6; 74.3) & \textbf{80.8 (72.4; 88.1)} & 71.3 (69.3; 73.1) & 71.3 (69.4; 73.5) & 71.3 (69.2; 73.7) & 70.8 (68.5; 73.1) \\
meps19 & 71.7 (69.5; 74.5) & \textbf{79.4 (68.5; 84.8)} & 73.1 (69.3; 76.1) & 72.3 (69.7; 75.5) & 71.3 (68.8; 73.4) & 71.3 (68.8; 74.0) \\
protein & 72.3 (70.0; 74.5) & \textbf{73.1 (71.3; 79.2)} & 71.1 (69.0; 73.7) & 71.1 (69.6; 74.0) & 71.2 (68.7; 72.7) & 71.2 (69.9; 73.2) \\
star & 72.8 (63.9; 79.3) & 73.0 (65.2; 80.5) & \textbf{76.3 (67.0; 84.4)} & 75.1 (66.4; 83.5) & 73.5 (66.4; 77.3) & 74.7 (65.7; 81.4) \\
superconductivity & 71.5 (69.7; 74.0) & \textbf{82.4 (71.6; 88.8)} & 70.9 (69.3; 73.6) & 71.2 (69.2; 72.5) & 71.4 (69.2; 75.0) & 71.4 (68.6; 73.9) \\
wec & 71.6 (69.7; 74.0) & \textbf{88.1 (85.2; 89.7)} & 71.2 (69.6; 72.8) & 71.1 (69.4; 72.2) & 70.6 (69.0; 72.5) & 70.3 (69.1; 72.1) \\
winered & 70.9 (62.1; 77.6) & \textbf{73.4 (65.0; 80.6)} & 70.9 (63.1; 79.4) & 67.5 (59.0; 78.9) & 73.1 (62.6; 76.9) & 71.6 (61.7; 81.2) \\
winewhite & 72.9 (68.1; 76.4) & 72.7 (66.2; 79.4) & \textbf{80.2 (69.9; 85.2)} & 77.1 (68.6; 81.9) & 76.5 (69.7; 83.8) & 76.3 (69.8; 82.6) \\

\bottomrule
\end{tabular}
\label{tab:acceptance_locus_lambda}
\end{table*}

\begin{table*}[t]
\centering
\caption{\textbf{Tuned acceptance rate at matched target coverage (\locusAlpha{}).}
Acceptance probability, $\Prbb(X \in A_\lambda)$ (\%).
For each dataset and method, the threshold $\lambda$ is selected on the validation split
to target an acceptance rate of $\approx 70\%$; test performance is then reported.
\locusAlpha{} methods use the \locus{} score $U_\alpha$ with fixed calibration level $\alpha=10\%$.
Entries are the median over 30 runs, with the 5th and 95th percentiles in parentheses.}
\setlength{\tabcolsep}{4pt}
\renewcommand{\arraystretch}{1.05}
\scriptsize
\begin{tabular}{lcc cccc}
\toprule
& \multicolumn{2}{c}{\textbf{Baselines (no \locus{})}} 
& \multicolumn{4}{c}{\textbf{\locusAlpha{} (loss-quantile score)}} \\
\cmidrule(lr){2-3}\cmidrule(lr){4-7}
\textbf{Dataset}
& \textbf{IFlag}
& \textbf{VARNet}
& \textbf{\locus{}-BART}
& \textbf{\locus{}-BART ($\gamma$)}
& \textbf{\locus{}-MC}
& \textbf{\locus{}-MC ($\gamma$)} \\
\midrule
airfoil & 73.2 (63.8; 80.8) & \textbf{82.8 (69.1; 95.9)} & 71.5 (64.5; 78.1) & 71.5 (62.5; 79.2) & 70.5 (60.3; 77.5) & 71.5 (63.5; 82.3) \\
bike & 73.0 (73.0; 73.0) & \textbf{88.2 (74.8; 94.3)} & 71.4 (69.5; 73.3) & 71.7 (70.3; 73.6) & 72.3 (70.9; 74.0) & 70.9 (69.2; 72.8) \\
concrete & 72.3 (66.0; 80.1) & \textbf{80.6 (73.8; 89.9)} & 71.8 (62.4; 83.7) & 71.8 (62.5; 84.2) & 73.8 (62.0; 83.6) & 72.8 (64.5; 79.2) \\
cycle & 72.4 (67.9; 76.2) & \textbf{92.2 (74.8; 98.9)} & 71.6 (68.5; 75.1) & 71.6 (67.4; 75.8) & 72.6 (68.2; 79.1) & 75.8 (68.8; 85.3) \\
electric & 71.5 (67.5; 75.9) & \textbf{79.0 (71.5; 83.9)} & 71.6 (68.2; 74.2) & 70.8 (67.8; 73.4) & 71.0 (67.7; 74.0) & 71.1 (68.5; 75.2) \\
homes & 71.9 (69.6; 74.3) & \textbf{80.8 (72.4; 88.1)} & 71.6 (68.6; 72.9) & 70.5 (68.8; 73.5) & 70.9 (68.5; 73.5) & 70.7 (68.9; 72.8) \\
meps19 & 71.7 (69.5; 74.5) & \textbf{79.4 (68.5; 84.8)} & 71.1 (67.2; 72.3) & 70.4 (68.5; 72.3) & 70.0 (68.0; 72.0) & 70.2 (67.5; 71.9) \\
protein & 72.3 (70.0; 74.5) & \textbf{73.1 (71.3; 79.2)} & \textbf{73.1 (69.9; 76.5)} & 72.1 (70.5; 74.4) & 71.2 (69.5; 72.7) & 71.6 (69.2; 74.0) \\
star & 72.8 (63.9; 79.3) & 73.0 (65.2; 80.5) & \textbf{75.1 (66.6; 83.3)} & 73.5 (64.2; 80.6) & 73.5 (66.7; 81.2) & 72.6 (67.3; 79.6) \\
superconductivity & 71.5 (69.7; 74.0) & \textbf{82.4 (71.6; 88.8)} & 70.9 (68.9; 72.8) & 70.8 (68.6; 72.8) & 71.9 (69.2; 73.3) & 71.1 (69.3; 73.3) \\
wec & 71.6 (69.7; 74.0) & \textbf{88.1 (85.2; 89.7)} & 70.6 (69.2; 72.0) & 70.1 (68.9; 72.0) & 70.1 (68.5; 71.5) & 70.3 (69.0; 71.3) \\
winered & 70.9 (62.1; 77.6) & 73.4 (65.0; 80.6) & \textbf{74.4 (62.8; 80.1)} & 71.9 (61.3; 80.7) & 74.1 (60.8; 80.3) & 71.9 (62.8; 81.7) \\
winewhite & 72.9 (68.1; 76.4) & 72.7 (66.2; 79.4) & \textbf{75.1 (67.1; 78.0)} & 73.7 (67.4; 77.9) & 72.3 (65.5; 79.4) & 73.0 (67.6; 77.2) \\
\bottomrule
\end{tabular}
\label{tab:acceptance_locus_alpha}
\end{table*}

\begin{table*}[hp]
\centering
\caption{\textbf{Tuned conditional large-loss rate at matched acceptance (\locusAlpha{}).}
Conditional exceedance among accepted points, $\Prbb(Z>\tau \mid X\in A_\lambda)$ (\%).
For each dataset and method, the threshold $\lambda$ is selected on the validation split
to achieve an acceptance rate of $\approx 70\%$; test performance is then reported.
\locusAlpha{} methods use the \locus{} score $U_\alpha$ with fixed calibration level $\alpha=10\%$.
Entries are the median over 30 runs, with the 5th and 95th percentiles in parentheses; lower is better.
Bold marks the lowest value in each row.}
\label{tab:condtail_locus_alpha}
\scriptsize
\begin{tabular}{lcc cccc}
\toprule
& \multicolumn{2}{c}{\textbf{Baselines (no \locus{})}} 
& \multicolumn{4}{c}{\textbf{\locusAlpha{} (loss-quantile score)}} \\
\cmidrule(lr){2-3}\cmidrule(lr){4-7}
\textbf{Dataset}
& \textbf{IFlag}
& \textbf{VARNet}
& \textbf{\locus{}-BART}
& \textbf{\locus{}-BART ($\gamma$)}
& \textbf{\locus{}-MC}
& \textbf{\locus{}-MC ($\gamma$)} \\
\midrule
airfoil & 31.0 (27.4; 35.3) & 27.5 (23.0; 30.7) & 23.3 (19.6; 26.7) & \textbf{22.2 (18.4; 28.3)} & 24.5 (19.2; 29.3) & 24.7 (19.1; 27.4) \\
bike & 30.1 (30.1; 30.1) & 28.4 (26.1; 29.2) & \textbf{19.0 (18.3; 20.8)} & 19.4 (18.3; 20.6) & 22.1 (21.3; 22.7) & 21.8 (20.6; 23.1) \\
concrete & 33.5 (27.2; 38.6) & 27.7 (23.0; 31.2) & \textbf{23.7 (19.0; 28.0)} & 25.2 (20.1; 29.1) & 27.6 (20.0; 31.9) & 27.4 (21.8; 32.9) \\
cycle & 28.9 (27.2; 30.0) & 29.7 (28.5; 30.2) & \textbf{26.7 (25.2; 27.8)} & 27.1 (25.5; 28.2) & 28.4 (26.7; 29.6) & 28.9 (27.5; 30.1) \\
electric & 31.5 (30.1; 32.1) & 29.4 (28.1; 30.7) & \textbf{24.1 (22.2; 25.4)} & \textbf{24.1 (22.4; 25.8)} & 24.7 (22.9; 26.0) & 24.6 (23.2; 26.4) \\
homes & 34.8 (33.7; 36.0) & 26.6 (24.8; 27.7) & 19.2 (17.9; 20.1) & 19.3 (17.7; 20.6) & 19.0 (17.6; 20.2) & \textbf{18.9 (18.1; 20.0)} \\
meps19 & 36.4 (35.4; 37.4) & 24.8 (21.5; 26.6) & 14.0 (12.3; 15.0) & 13.8 (11.9; 14.5) & \textbf{13.0 (11.7; 14.3)} & \textbf{13.0 (11.9; 14.4)} \\
protein & 28.0 (27.4; 29.0) & 27.4 (25.8; 29.4) & 24.6 (23.5; 25.2) & 23.8 (22.9; 25.2) & 22.8 (21.4; 23.5) & \textbf{22.6 (21.9; 23.5)} \\
star & 29.1 (26.3; 31.9) & 29.0 (26.9; 33.8) & 29.1 (26.1; 31.9) & 29.3 (26.1; 32.4) & \textbf{28.5 (25.6; 31.5)} & 28.9 (26.5; 33.1) \\
superconductivity & 24.0 (23.1; 25.3) & 25.8 (21.3; 28.1) & \textbf{17.9 (17.0; 19.0)} & 18.1 (16.4; 19.2) & 18.3 (16.7; 19.7) & \textbf{17.9 (16.7; 19.3)} \\
wec & 34.8 (33.5; 35.3) & 23.0 (21.4; 24.6) & 15.1 (12.2; 18.4) & 14.6 (12.9; 18.7) & \textbf{8.2 (7.0; 9.2)} & \textbf{8.2 (7.4; 8.9)} \\
winered & 30.2 (26.6; 32.9) & 28.8 (24.9; 32.9) & 27.5 (24.0; 30.7) & \textbf{26.9 (23.3; 29.3)} & 27.5 (23.4; 30.5) & 27.5 (23.1; 31.5) \\
winewhite & 29.9 (28.2; 31.5) & 30.3 (28.3; 31.7) & 27.1 (24.6; 29.3) & \textbf{26.5 (24.3; 29.3)} & 27.8 (24.1; 29.4) & 27.4 (25.0; 29.2) \\
\bottomrule
\end{tabular}
\end{table*}

\end{document}